\newtheorem*{theorem*}{Theorem}
\newtheorem{proposition}{Proposition}
\newtheorem*{proposition*}{Proposition}
\DeclareMathOperator*{\argmax}{argmax}
\begin{document}

\title{Knowledge as Priors: Cross-Modal Knowledge Generalization\\ for Datasets without Superior Knowledge}

\author{Long Zhao\textsuperscript{1} \quad Xi Peng\textsuperscript{2} \quad Yuxiao Chen\textsuperscript{1} \quad Mubbasir Kapadia\textsuperscript{1} \quad Dimitris N. Metaxas\textsuperscript{1}\\
\textsuperscript{1}Rutgers University \quad \textsuperscript{2}University of Delaware\\
{\tt\small \{lz311,yc984,mk1353,dnm\}@cs.rutgers.edu, xipeng@udel.edu}
}

\maketitle

\begin{abstract}
   Cross-modal knowledge distillation deals with transferring knowledge from a model trained with superior modalities (Teacher) to another model trained with weak modalities (Student). Existing approaches require paired training examples exist in both modalities. However, accessing the data from superior modalities may not always be feasible. For example, in the case of 3D hand pose estimation, depth maps, point clouds, or stereo images usually capture better hand structures than RGB images, but most of them are expensive to be collected. In this paper, we propose a novel scheme to train the Student in a Target dataset where the Teacher is unavailable. Our key idea is to generalize the distilled cross-modal knowledge learned from a Source dataset, which contains paired examples from both modalities, to the Target dataset by modeling knowledge as priors on parameters of the Student. We name our method ``Cross-Modal Knowledge Generalization'' and demonstrate that our scheme results in competitive performance for 3D hand pose estimation on standard benchmark datasets.
\end{abstract}

\section{Introduction}\label{sec:introduction}

Leveraging multi-modal knowledge to boost the performance of classic computer vision problems, such as classification~\cite{ngiam2011multimodal,ramachandram2017deep,wang2019efficient}, object detection~\cite{gupta2016cross,song2016deep,xu2017learning} and gesture recognition~\cite{abavisani2019improving,chen2019construct,spurr2018cross,tian2018cr,yuan2019rgb,zhao2018learning,zhao2019semantic}, has emerged as a promising research field in recent years. Current paradigms for transferring knowledge across modalities involve aligning feature representations from multiple modalities of data during training, and then improving the performance of a unimodal system during testing with the aligned feature representations. Several different schemes for learning these feature representations have been proposed over the years~\cite{abavisani2019improving,spurr2018cross,wang2019distill,wang2019efficient}, and all of these rely on the availability of paired training samples from different modalities.

\begin{figure}[t]
\begin{center}
\includegraphics[width=\linewidth]{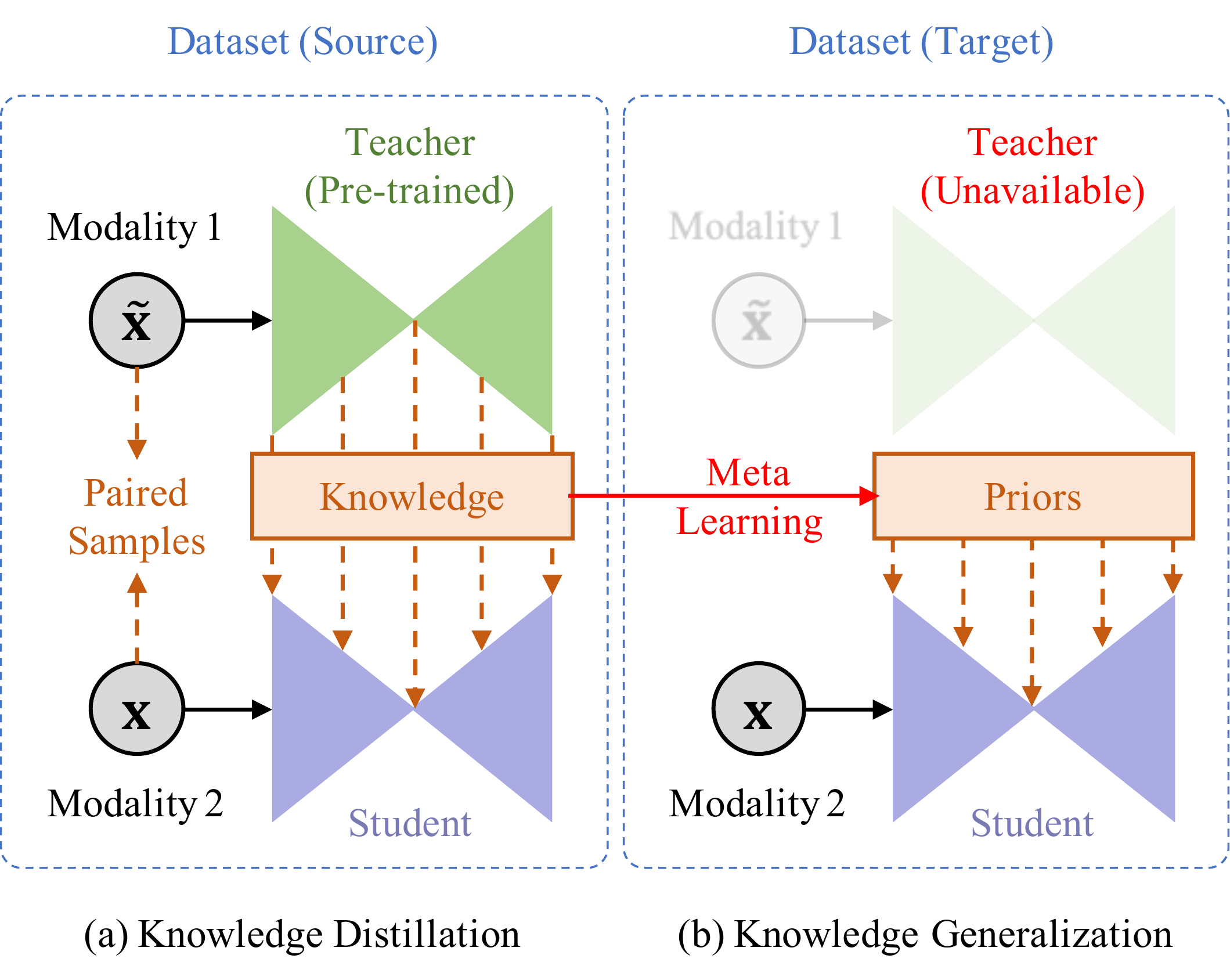}
\end{center}
\vspace{-.8em}
\caption{Cross-modal knowledge generalization. (a) Existing approaches distill cross-modal knowledge from the teacher to student in a source dataset. (b) We propose knowledge generalization which transfers learned knowledge in the source to a target dataset where the superior knowledge, \ie, the teacher, is unavailable.}
\label{fig:idea}
\end{figure}

Recently, Gupta~\etal~\cite{gupta2016cross} have introduced \textit{Cross-Modal Knowledge Distillation (CMKD)} which is a generic yet efficient scheme among these. They transfer knowledge across different modalities by a \textit{Teacher-Student} scheme~\cite{hinton2014distilling,srinivas2018knowledge,zagoruyko2017paying}. Generally, teacher networks deliver excellent performance since they are trained on modalities with superior knowledge. However, data of these modalities may be limited or expensive to be collected. On the other hand, a student network is trained using a weak modality and thereby often results in lower performance. The goal of knowledge distillation is to transfer superior knowledge from teachers to the student by aligning their intermediate feature representations. For simplicity, in this paper, we consider a form of cross-modal knowledge distillation problems in datasets where only two modalities, \ie, one teacher and one student, are involved as shown in Fig.~\ref{fig:idea}~(a).

The question we ask in this work is, \textit{what is the analogue of this paradigm for datasets which do not have modalities with superior knowledge?} As a motivating example, consider the case of 3D hand pose estimation. There are a number of ``superior'' modalities beyond RGB images which capture more accurate 3D hand structures, \eg, depth maps~\cite{qian2014realtime,tompson2014real,zimmermann2017learning}, point clouds~\cite{ge2018point,li2019point}, or stereo images~\cite{zhang2017hand}. These data together with their paired RGB images can be collected by corresponding devices or synthesized using pre-defined hand shape models~\cite{ge20193d,romero2017embodied}. However, most of real-world datasets still come with only a single weak modality, \ie, RGB images, which raises the question: \textit{is it possible for neural networks to transfer learned cross-modal knowledge to those target datasets where superior modalities are absent?}

We answer this question in this paper and propose a technique to transfer learned cross-modal knowledge from a source dataset, where both modalities are available, to the target dataset, where only one weak modality exists. Our technique uses ``paired'' data from the two modalities in the source dataset to distill cross-modal knowledge, and leverages meta-learning to generalize the knowledge to the target dataset by treating it as priors on the parameters of the student network. We call our scheme \textit{Cross-Modal Knowledge Generalization (CMKG)}, which is illustrated in Fig.~\ref{fig:idea}~(b). We further evaluate the performance of the proposed scheme in 3D hand pose estimation. We show that our generalized knowledge serves as a good regularizer to help the network learn better representations for 3D hands, and improves final results in the target dataset as well.

Our work makes the following contributions. First, unlike existing methods that distill knowledge across modalities in the same dataset, we introduce a novel method for Cross-Modal Knowledge Generalization, which generalizes the learned knowledge in the source to a target dataset where the superior modality is unavailable. Second, we introduce a novel meta-learning approach to transfer knowledge across datasets. Specifically, in Sect.~\ref{sec:distillation}, a simple yet powerful method is presented to distill cross-modal knowledge in the source dataset. The learned knowledge in the source dataset is then regarded as priors on network parameters during the training procedure in the target dataset. Sect.~\ref{sec:generalization} describes the meta-learning algorithm for learning these priors. Third, we comprehensively evaluate our scheme in 3D hand pose estimation and demonstrate its comparable performance to the state-of-the-art methods in Sect.~\ref{sec:experiments}. Note that our scheme can be easily generalized to different tasks, and we leave this for future work.

\section{Related Work}\label{sec:related_work}

\textbf{Knowledge Distillation.} The concept of knowledge distillation was first shown by Hinton~\etal~\cite{hinton2014distilling}. Subsequent research~\cite{ba2014deep,chen2017learning,romero2015fitnets} enhanced distillation by matching intermediate representations in the networks along with outputs using different approaches. Zagoruyko and Komodakis~\cite{zagoruyko2017paying} proposed to align attentional activation maps between networks. Srinivas and Fleuret~\cite{srinivas2018knowledge} improved it by applying Jacobian matching to networks. Recently, cross-modal knowledge distillation~\cite{gupta2016cross,wang2019efficient,yuan2019rgb} extended knowledge distillation by applying it to transferring knowledge across different modalities. Our approach generalizes cross-modal knowledge distillation to target datasets where superior modalities are missing.

\textbf{Meta-Learning.} Meta-learning is also known as ``learning to learn'', which intends to learn how learning can be performed in a more efficient manner. Previous approaches studied this problem from a
probabilistic modeling perspective~\cite{fe2003bayesian,lawrence2004learning} or in metric spaces~\cite{mishra2018simple,oreshkin2018tadam,snell2017prototypical}. Recent remarkable advances in gradient-based optimization approaches have rekindled the interest in meta-learning. Among these, Model-Agnostic Meta-Learning (MAML)~\cite{finn2017model} is proposed to solve few-shot learning. Li~\etal~\cite{li2018learning} extended MAML for domain generalization. Balaji~\etal~\cite{balaji2018metareg} introduced a meta-regularization function to train networks which can be easily generalized to different domains. Our meta-learning algorithm follows the spirit of these gradient-based methods but aims to learn cross-modal knowledge as priors.

\textbf{3D Hand Pose Estimation.} Estimating 3D hand poses from depth maps has made great progress in the past few years~\cite{ge2016robust,ge20173d,moon2018v2v,qian2014realtime,sun2015cascaded}. On the other hand, 3D hand pose estimation from RGB
images is significantly more challenging. Zimmermann and Brox~\cite{zimmermann2017learning} first proposed a deep network to learn a network-implicit 3D articulation prior together with 2D key points for predicting 3D hand poses. Other studies~\cite{spurr2018cross,yang2019aligning,yang2019disentangling} learned latent representations with a variational auto-encoder for inference of 3D hand poses. Note that some recent methods~\cite{boukhayma20193d,ge20193d,panteleris2018using,zhang2019end} focused on recovering the full shapes of 3D hands other than locations of key hand joints, which have a different research target compared with our work.

Yuan~\etal~\cite{yuan2019rgb} is the most related work in spirit to ours. Like our work, they employed cross-modal knowledge distillation to improve the performance of RGB-based 3D hand pose estimation. Our method differs significantly in that in addition to knowledge distillation, we aim to address a more challenging problem of transferring cross-modal knowledge to target datasets where depth maps are unavailable.

\section{Cross-Modal Knowledge Distillation}\label{sec:distillation}

We assume that the input data is available in two modalities $\mathbf{x}_i$ and $\tilde{\mathbf{x}}_i$, where $\tilde{\mathbf{x}}_i$ owns superior knowledge than $\mathbf{x}_i$. For each modality, one network is primarily trained with the data from its own modality. To be specific, we train a \textit{teacher} network $g$ using $\tilde{\mathbf{x}}_i$ and a \textit{student} network $f$ using $\mathbf{x}_i$. Given the ground truth $\mathbf{y}_i$, the teacher network parameterized by $\boldsymbol{\psi}$ minimizes the following $\ell^2$ regression loss:
\begin{equation}
\label{eq:dist_reg}
\mathcal{L}_\text{REG}(\tilde{\mathbf{x}}_i, \mathbf{y}_i; \boldsymbol{\psi}) = \left\| g(\tilde{\mathbf{x}}_i; \boldsymbol{\psi}) - \mathbf{y}_i \right\|^2.
\end{equation}

During the training of the student network, the goal of cross-modal knowledge distillation is to improve the learning process by transferring the knowledge from the teacher to student. The transferred knowledge can be viewed as an extra supervision in addition to the ground truth. To this end, the knowledge of networks is shared by aligning the semantics of the deep representations, \ie, activation maps of intermediate layers, between the teacher and student.

\begin{figure}[t]
\begin{center}
\includegraphics[width=1.\linewidth]{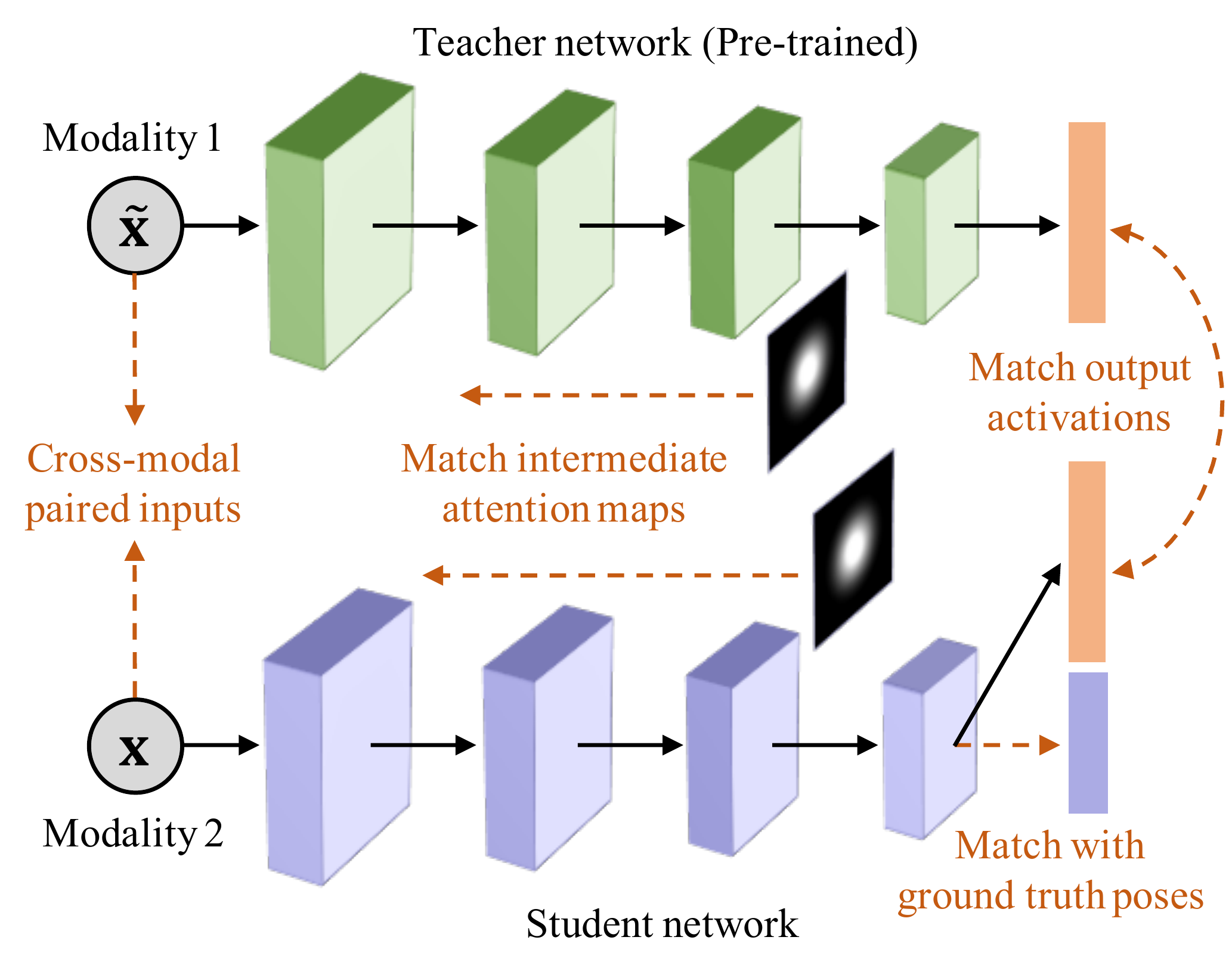}
\end{center}
\vspace{-.8em}
\caption{Illustration of our proposed approach for cross-modal knowledge distillation. For the student network, we match its outputs with the ground truth poses ($\mathcal{L}_\text{REG}$). Given cross-modal paired inputs, we match the final activations of a pre-trained teacher network ($\mathcal{L}_\text{ACT}$). We also match aggregated activations or ``attention'' maps between networks, similar to the work of~\cite{zagoruyko2017paying} ($\mathcal{L}_\text{ATT}$). The distillation loss ($\mathcal{L}_\text{DIST}$) is a combination of the last two.}
\label{fig:distillation}
\end{figure}

Let $Q_j \in \mathbb{R}^{C \times H \times W}$ denote the activation map of the $j$-th layer in the network, which consists of $C$ feature channels with spatial dimensions $H \times W$. We feed $\mathbf{x}_i$ to the student network $f$ and its paired $\tilde{\mathbf{x}}_i$ to the pre-trained teacher network $g$. Their last activation maps $Q_l$ are aligned by:
\begin{equation}
\label{eq:dist_act}
\mathcal{L}_\text{ACT}(\mathbf{x}_i, \tilde{\mathbf{x}}_i; \boldsymbol{\theta}) = \left\| Q_l(\mathbf{x}_i; f) - Q_l(\tilde{\mathbf{x}}_i; g) \right\|^2,
\end{equation}
where $\boldsymbol{\theta}$ are the parameters of the student network. Furthermore, we also match the attention maps~\cite{zagoruyko2017paying} of the intermediate layers between the teacher and student. Specifically, let $A_j \in \mathbb{R}^{H \times W}$ be the channel-wise attention map of $Q_j$ calculated by $A_j = \sum_{i=1}^{C}\|Q_j^{(i)}\|^2$, where $Q_j^{(i)}$ represents the $i$-th channel of $Q_j$. Then $A_j$ is $\ell^2$-normalized using $\bar{A}_j = \frac{A_j}{\|A_j\|}$, and we define the attention loss as:
\begin{equation}
\label{eq:dist_att}
\mathcal{L}_\text{ATT}(\mathbf{x}_i, \tilde{\mathbf{x}}_i; \boldsymbol{\theta}) = \sum_{i \in \mathcal{I}} \left\| \bar{A}_i(\mathbf{x}_i; f) - \bar{A}_i(\tilde{\mathbf{x}}_i; g) \right\|^2,
\end{equation}
where $\mathcal{I}$ denote the indices of all teacher-student activation layer pairs for which we want to transfer attention maps. Our full knowledge distillation loss can be written as:
\begin{equation}
\label{eq:dist_full}
\mathcal{L}_\text{DIST}(\mathbf{x}_i, \tilde{\mathbf{x}}_i; \boldsymbol{\theta}) = \mathcal{L}_\text{ACT} + \lambda \cdot \mathcal{L}_\text{ATT},
\end{equation}
where $\lambda$ is a hyper-parameter which is set to $1.0 \times 10^3$ empirically in the rest of the paper. The final student network is trained with the regression loss $\mathcal{L}_\text{REG}$ in Eq.~\eqref{eq:dist_reg} together with the distillation loss $\mathcal{L}_\text{DIST}$ in Eq.~\eqref{eq:dist_full}. The whole pipeline of our approach is summarized in Fig.~\ref{fig:distillation}.

\section{Cross-Modal Knowledge Generalization}\label{sec:generalization}

Consider two datasets: $\mathcal{D}_S = {\{\mathbf{x}^S_i, \tilde{\mathbf{x}}^S_i, \mathbf{y}^S_i\}}_i$ is a source dataset while $\mathcal{D}_T = {\{\mathbf{x}^T_i, \mathbf{y}^T_i\}}_i$ denotes a target dataset. Cross-modal knowledge can be efficiently distilled in the source dataset by neural networks as shown in Sect.~\ref{sec:distillation}, since training pairs $(\mathbf{x}^S_i, \tilde{\mathbf{x}}^S_i)$ are available in $\mathcal{D}_S$. However, due to the absence of $\mathcal{K}_T = \{\tilde{\mathbf{x}}^T_i\}_i$, direct knowledge distillation is impossible in the target dataset $\mathcal{D}_T$.

In this paper, we address a novel and challenging task of \textit{Cross-Modal Knowledge Generalization}. Specifically, we aim to learn the network parameters $\boldsymbol{\theta}_\text{DIST}$ which contain superior knowledge $\mathcal{K}_T$ in the target dataset $\mathcal{D}_T$. As mentioned above, the main challenge is that $\mathcal{K}_T$ is unavailable in $\mathcal{D}_T$. Our key idea is to generalize the learned knowledge from $\mathcal{D}_S$ to $\mathcal{D}_T$. This is achieved by interpreting knowledge as priors on the network parameters, which can be learned in $\mathcal{D}_S$ with meta-learning. In the following sections, we first derive our formulation from a probabilistic view. Then we present the meta-learning algorithm for knowledge generalization and theoretically show its connection to the expectation maximization (EM) algorithm.

\subsection{Knowledge as Priors}

From a Bayesian perspective, a neural network can be viewed as a probabilistic model $P(\mathbf{y}_i|\mathbf{x}_i, \boldsymbol{\theta})$: given an input $\mathbf{x}_i$, the network assigns a probability to each possible $\mathbf{y}_i \in \mathcal{Y}$ with the parameters $\boldsymbol{\theta}$. Here, we consider a regression problem where $P(\mathbf{y}_i|\mathbf{x}_i, \boldsymbol{\theta})$ is a Gaussian distribution which corresponds to a mean squared loss, and $\mathbf{x}_i$ is mapped onto the parameters of a distribution on $\mathcal{Y}$ using network layers parameterized by $\boldsymbol{\theta}$. Given a dataset $\mathcal{D} = {\{\mathbf{x}_i, \mathbf{y}_i\}}_i$, $\boldsymbol{\theta}$ can be learned by maximum likelihood estimation (MLE):
\begin{equation}
\begin{aligned}
\label{eq:mle}
\boldsymbol{\theta}_\text{MLE} &= \argmax_{\boldsymbol{\theta}} \log P(\mathcal{D}|\boldsymbol{\theta})\\
&= \argmax_{\boldsymbol{\theta}} \sum_{i} \log P(\mathbf{y}_i|\mathbf{x}_i, \boldsymbol{\theta}).
\end{aligned}
\end{equation}
We assume that $\log P(\mathcal{D}|\boldsymbol{\theta})$ is differentiable \wrt $\boldsymbol{\theta}$, and then Eq.~\eqref{eq:mle} is typically solved by gradient descent.

The objective of cross-modal knowledge generalization is to find the parameters $\boldsymbol{\theta}_\text{DIST}$ by using the training examples in $\mathcal{D}_T$ with intractable knowledge $\mathcal{K}_T$. This leads to maximize the posterior density of the parameters $\boldsymbol{\theta}$ directly depends on $\mathcal{D}_T$ and implicitly depends on $\mathcal{K}_T$. In order to explicitly capture this dependence, we introduce a latent variable $\boldsymbol{\phi}$ summarizing the knowledge carried by $\mathcal{K}_T$:
\begin{equation}
\begin{aligned}
\label{eq:gen}
P(\boldsymbol{\theta}|\mathcal{D}_T, {}&\mathcal{K}_T) = \int P(\boldsymbol{\theta}, \boldsymbol{\phi}|\mathcal{D}_T, \mathcal{K}_T) d\boldsymbol{\phi}\\
&= \int P(\boldsymbol{\theta}|\mathcal{D}_T, \mathcal{K}_T, \boldsymbol{\phi}) P(\boldsymbol{\phi}|\mathcal{D}_T, \mathcal{K}_T) d\boldsymbol{\phi}\\
&= \int P(\boldsymbol{\theta}|\mathcal{D}_T, \boldsymbol{\phi}) P(\boldsymbol{\phi}|\mathcal{D}_T, \mathcal{K}_T) d\boldsymbol{\phi}.
\end{aligned}
\end{equation}
Note that the last equation is the result of assuming that $\mathcal{K}_T$ and $\boldsymbol{\theta}$ are conditionally independent given the latent variable $\boldsymbol{\phi}$. Since both $\mathcal{K}_T$ and integrating Eq.~\eqref{eq:gen} over $\boldsymbol{\phi}$ are intractable, we make an approximation that uses a \textit{point estimation} $\boldsymbol{\phi}_{\text{META}}$. This point estimation is obtained via the meta-learning approach described in Sect.~\ref{sec:generalization:meta}, hence avoiding the need to perform integration over $\boldsymbol{\phi}$ or interact $\mathcal{K}_T$. Consequently, maximizing the logarithm of the posterior density of Eq.~\eqref{eq:gen} can be written as:
\begin{equation}
\begin{aligned}
\label{eq:gen_argmax}
\boldsymbol{\theta}_\text{DIST} &= \argmax_{\boldsymbol{\theta}} \log P(\boldsymbol{\theta}|\mathcal{D}_T, \mathcal{K}_T)\\
&\approx \argmax_{\boldsymbol{\theta}} \log P(\boldsymbol{\theta}|\mathcal{D}_T, \boldsymbol{\phi}_{\text{META}})\\
&= \argmax_{\boldsymbol{\theta}} \underbrace{\log P(\mathcal{D}_T|\boldsymbol{\theta})}_{\text{Likelihood}} + \underbrace{\log P(\boldsymbol{\theta}|\boldsymbol{\phi}_{\text{META}})}_{\text{Prior (Knowledge)}},
\end{aligned}
\end{equation}
where the last equality results from a direct application of Bayes rule. So, finding the parameters $\boldsymbol{\theta}_\text{DIST}$ involves a two step training procedure: (1) optimizing the prior term which obtains the point estimation $\boldsymbol{\phi}_{\text{META}}$ using meta-learning and (2) optimizing the likelihood term which maximizes Eq.~\eqref{eq:gen_argmax} using the learned parameters $\boldsymbol{\phi}_{\text{META}}$.

In a Bayesian setting, priors on the parameters can be interpreted as regularization. Thus the prior term in Eq.~\eqref{eq:gen_argmax} is implemented as a regularizer during network training. Several other regularization schemes have been proposed in the literature such as weight decay~\cite{krogh1992simple}, dropout~\cite{srivastava2014dropout,wan2013regularization} and batch normalization~\cite{ioffe2015batch}. While they aim to reduce error on examples drawn from the test distribution, the objective of our work is to learn a regularizer that captures cross-modal knowledge learned from the source dataset.

\subsection{Learning Priors with Meta-Learning}
\label{sec:generalization:meta}

As mentioned above, we model the prior term as a regularizer $\mathcal{R}(\boldsymbol{\theta};\boldsymbol{\phi})$. Given the input $\boldsymbol{\theta}$, $\mathcal{R}$ is implemented with a neural network parameterized by $\boldsymbol{\phi}$.

\begin{figure}[t]
\begin{center}
\includegraphics[width=1.\linewidth]{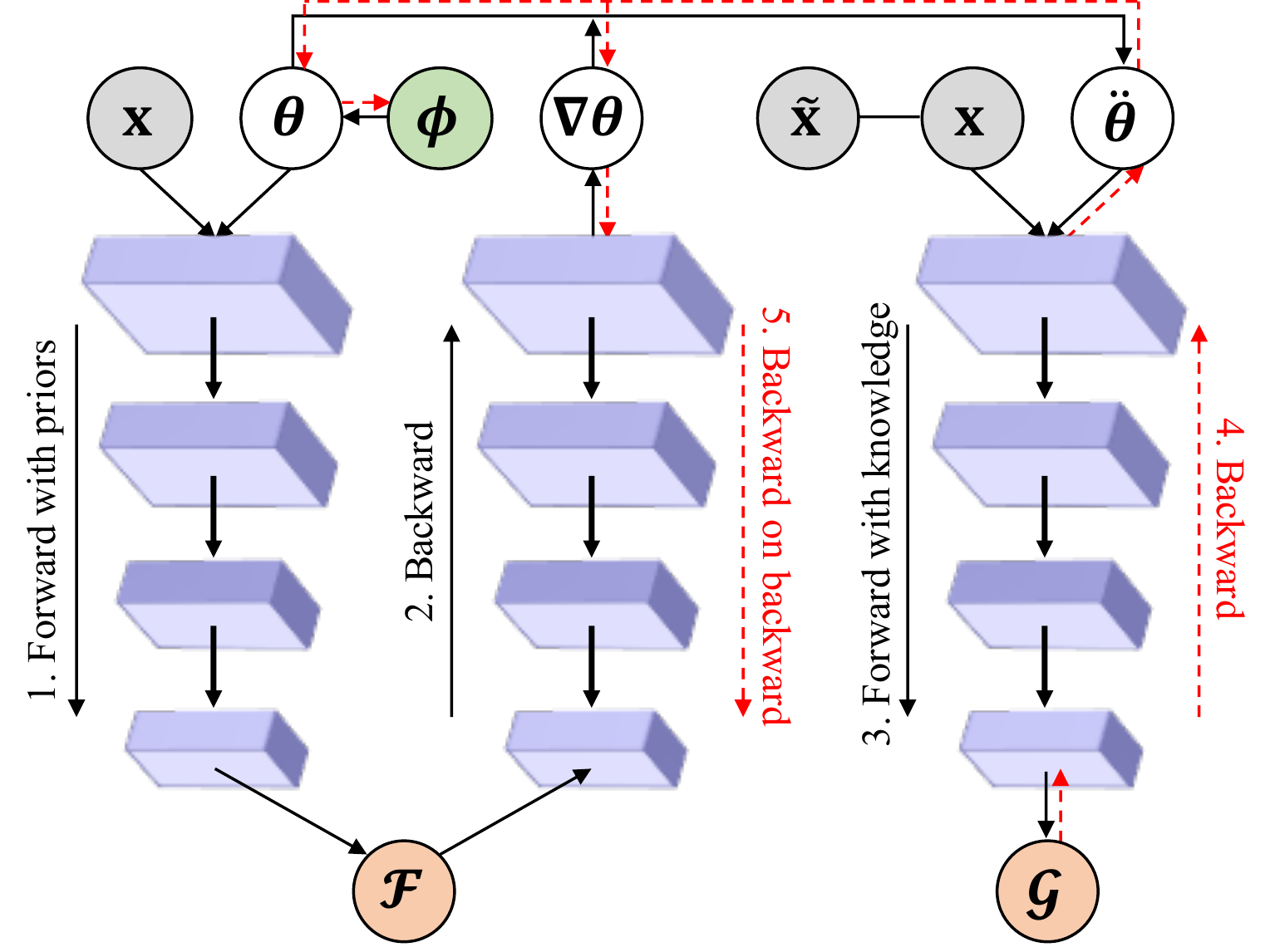}
\end{center}
\vspace{-.8em}
\caption{Computational graph of our meta-training algorithm (as shown in Algorithm~\ref{alg:meta_training}) in a deep neural network, which can be efficiently implemented using the second order derivative.}
\label{fig:comp_graph}
\end{figure}

As described in Sect.~\ref{sec:distillation}, cross-modal knowledge distillation leads to optimize the following objective:
\begin{equation}
\label{eq:meta_dist}
\mathcal{G}(\mathbf{x}_i, \tilde{\mathbf{x}}_i, \mathbf{y}_i; \boldsymbol{\theta}) = \mathcal{L}_\text{REG}(\mathbf{x}_i, \mathbf{y}_i; \boldsymbol{\theta}) + \mathcal{L}_\text{DIST}(\mathbf{x}_i, \tilde{\mathbf{x}}_i; \boldsymbol{\theta}),
\end{equation}
where $\mathcal{L}_\text{REG}$ is the regression loss minimizing the mean squared errors of the prediction and ground truth, and the distillation loss $\mathcal{L}_\text{DIST}$ distills knowledge from the teacher to student. Using the regularizer $\mathcal{R}$, we introduce a regularized regression loss which is defined as:
\begin{equation}
\label{eq:meta_reg}
\mathcal{F}(\mathbf{x}_i, \mathbf{y}_i; \boldsymbol{\theta}, \boldsymbol{\phi}) = \mathcal{L}_\text{REG}(\mathbf{x}_i, \mathbf{y}_i; \boldsymbol{\theta}) + \mathcal{R}(\boldsymbol{\theta};\boldsymbol{\phi}).
\end{equation}
During the training procedure on the source dataset, we aim to learn the regularizer $\mathcal{R}$ in Eq.~\eqref{eq:meta_reg} which mimics the behavior of $\mathcal{L}_\text{DIST}$ in Eq.~\eqref{eq:meta_dist}, so that $\mathcal{F}$ can be applied to the target dataset where the superior knowledge is missing. We now describe the procedure for learning $\mathcal{R}$.

When training the student network on the source dataset, at iteration $k$, we begin by sampling a mini-batch from the dataset. Using this batch, $l$ steps of gradient descent are first performed with the regularized regression loss $\mathcal{F}$. Let $\ddot{\boldsymbol{\theta}}_k$ denote the network parameters after these $l$ steps. Then the full loss $\mathcal{G}$ on the same batch computed using $\ddot{\boldsymbol{\theta}}_k$ is minimized \wrt the regularizer parameters $\boldsymbol{\phi}$. The regularizer $\mathcal{R}$ is finally updated with the gradients which unroll through the $l$ gradient steps. This ensures that $\mathcal{G}$ can be approximated by $\mathcal{F}$ using $\mathcal{R}$. After finishing the training, since the same regularizer is trained on every pair of $(\mathbf{x}_i, \tilde{\mathbf{x}}_i)$, the resulting $\mathcal{R}$ captures the notion of cross-modal knowledge contained in the source dataset. Please refer to Fig.~\ref{fig:comp_graph} for an illustration of the meta-training step. The entire algorithm is given in Algorithm~\ref{alg:meta_training}. Note that $l$ is set to 1 empirically in this paper, as we observe that a $1$-step update is sufficient to achieve good performance.

\begin{algorithm}[t]
\begin{algorithmic}[1]
\Require Batch size $N$, \# of iterations $K$, learning rate $\alpha$.
\Require \# of inner iterations $l$, meta learning rate $\beta$.
\State Initialize $\boldsymbol{\theta}_0$, $\boldsymbol{\phi}_0$
\For{$k=0$ to $K-1$}
    \State Sample $N$ examples $\{(\mathbf{x}^S_n, \tilde{\mathbf{x}}^S_n, \mathbf{y}^S_n) \sim \mathcal{D}_S\}_{n = 1}^N$
    \State $\ddot{\boldsymbol{\theta}}_0 \gets \boldsymbol{\theta}_k$
    \For{$i=0$ to $l-1$}
        \State $\ddot{\boldsymbol{\theta}}_{i+1} \gets \ddot{\boldsymbol{\theta}}_i - \alpha\nabla_{\ddot{\boldsymbol{\theta}}_i}\mathcal{F}(\mathbf{x}^S_n, \mathbf{y}^S_n; \ddot{\boldsymbol{\theta}}_i, \boldsymbol{\phi}_k)$ \Comment{E-step}
    \EndFor
    \State $\ddot{\boldsymbol{\theta}}_k \gets \ddot{\boldsymbol{\theta}}_l$
    \State $\boldsymbol{\phi}_{k+1} \gets \boldsymbol{\phi}_k - \beta\nabla_{\boldsymbol{\phi}_k}\mathcal{G}(\mathbf{x}^S_n, \tilde{\mathbf{x}}^S_n, \mathbf{y}^S_n; \ddot{\boldsymbol{\theta}}_k)$ \Comment{M-step}
    \State $\boldsymbol{\theta}_{k+1} \gets \boldsymbol{\theta}_k - \alpha\nabla_{\boldsymbol{\theta}_k}\mathcal{G}(\mathbf{x}^S_n, \tilde{\mathbf{x}}^S_n, \mathbf{y}^S_n; \boldsymbol{\theta}_k)$
\EndFor
\State $\boldsymbol{\phi}_{\text{META}} \gets \boldsymbol{\phi}_K$
\end{algorithmic}
\caption{Meta-training for learning priors.}
\label{alg:meta_training}
\end{algorithm}

Once the regularizer is learned, its parameters $\boldsymbol{\phi}_{\text{META}}$ are frozen and the final student network
initialized from scratch is trained on the target dataset using the regularized loss function $\mathcal{F}$. This meta-testing procedure generalizes the learned knowledge to the target dataset with $\mathcal{R}$ parameterized by $\boldsymbol{\phi}_{\text{META}}$ as summarized in Algorithm~\ref{alg:meta_testing}.

Our meta-learning approach is general and can be implemented by any type of regularizer. In this paper, we use weighted $\ell^2$ loss as our regularization function:
\begin{equation}
\label{eq:reg}
\mathcal{R}(\boldsymbol{\theta}; \boldsymbol{\phi}) = \sum_i \phi_i \|\theta_i\|^2,
\end{equation}
where $\phi_i$ and $\theta_i$ are the $i$-th weight and parameter of the network. The use of weighted $\ell^2$ loss can be interpreted as a learnable weight decay mechanism: weights $\theta_i$ for which $\phi_i$ is large will be decayed to zero and those for which $\phi_i$ is small will be boosted. By using our meta-learning approach, we select a set of weights that carry cross-modal knowledge across every pair of inputs $(\mathbf{x}_i, \tilde{\mathbf{x}}_i)$.

\subsection{Theoretical Understanding}

This section gives a theoretical understanding of Algorithm~\ref{alg:meta_training} in Sect.~\ref{sec:generalization:meta}. We draw its connection to the expectation maximization (EM) algorithm and thus its convergence is theoretically guaranteed. To achieve this, we first derive the lower bound of the target objective and then show how it is solved by our meta-learning algorithm using EM.

In a Bayesian framework, given the evidence $\mathcal{D}_S$, learning the parameters $\boldsymbol{\phi}$ of priors leads to maximize the likelihood $P(\mathcal{D}_S|\boldsymbol{\phi})$. Proposition~\ref{thm:elbo} indicates its lower bound.

\begin{proposition}
\label{thm:elbo}
Let $q$ be any posterior distribution function over the latent variables $\boldsymbol{\theta}$ given the evidence $\mathcal{D}_S$. Then, the marginal log-likelihood can be lower bounded:
\begin{equation}
\label{eq:elbo_bound}
\log P(\mathcal{D}_S|\boldsymbol{\phi}) = \log \int P(\mathcal{D}_S, \boldsymbol{\theta}|\boldsymbol{\phi}) d\boldsymbol{\theta} \geq \mathcal{E}(q,\boldsymbol{\phi}),
\end{equation}
where $\mathcal{E}$ is the evidence lower-bound (ELBO) defined as:
\begin{equation}
\label{eq:elbo_def}
\mathcal{E} \triangleq \mathbb{E}_q [ \log P(\mathcal{D}_S|\boldsymbol{\theta})]\\ - \textup{KL}[ q(\boldsymbol{\theta}|\mathcal{D}_S) \| P(\boldsymbol{\theta}|\boldsymbol{\phi})].
\end{equation}
\end{proposition}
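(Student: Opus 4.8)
The plan is to establish the ELBO via the standard variational argument, introducing the auxiliary distribution $q$ and applying Jensen's inequality to the concave logarithm. First I would write the marginal likelihood as an integral over the latent variable $\boldsymbol{\theta}$, then multiply and divide the integrand by $q(\boldsymbol{\theta}|\mathcal{D}_S)$ so that the integral becomes an expectation under $q$:
\begin{equation*}
\log P(\mathcal{D}_S|\boldsymbol{\phi}) = \log \mathbb{E}_q\!\left[ \frac{P(\mathcal{D}_S, \boldsymbol{\theta}|\boldsymbol{\phi})}{q(\boldsymbol{\theta}|\mathcal{D}_S)} \right].
\end{equation*}
Since $\log$ is concave, Jensen's inequality immediately gives $\log P(\mathcal{D}_S|\boldsymbol{\phi}) \geq \mathbb{E}_q[\log P(\mathcal{D}_S, \boldsymbol{\theta}|\boldsymbol{\phi})] - \mathbb{E}_q[\log q(\boldsymbol{\theta}|\mathcal{D}_S)]$.

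The next step is to massage the right-hand side into the claimed form. I would factor the joint as $P(\mathcal{D}_S, \boldsymbol{\theta}|\boldsymbol{\phi}) = P(\mathcal{D}_S|\boldsymbol{\theta},\boldsymbol{\phi})\,P(\boldsymbol{\theta}|\boldsymbol{\phi})$, and then invoke the modeling assumption (already implicit in the paper's setup, where the network likelihood $P(\mathcal{D}_S|\boldsymbol{\theta})$ does not depend on $\boldsymbol{\phi}$ once $\boldsymbol{\theta}$ is fixed) to replace $P(\mathcal{D}_S|\boldsymbol{\theta},\boldsymbol{\phi})$ with $P(\mathcal{D}_S|\boldsymbol{\theta})$. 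Regrouping the remaining terms, $\mathbb{E}_q[\log P(\boldsymbol{\theta}|\boldsymbol{\phi})] - \mathbb{E}_q[\log q(\boldsymbol{\theta}|\mathcal{D}_S)]$ is exactly $-\mathrm{KL}[q(\boldsymbol{\theta}|\mathcal{D}_S)\,\|\,P(\boldsymbol{\theta}|\boldsymbol{\phi})]$, which yields $\mathcal{E}(q,\boldsymbol{\phi}) = \mathbb{E}_q[\log P(\mathcal{D}_S|\boldsymbol{\theta})] - \mathrm{KL}[q(\boldsymbol{\theta}|\mathcal{D}_S)\,\|\,P(\boldsymbol{\theta}|\boldsymbol{\phi})]$ as stated in Eq.~\eqref{eq:elbo_def}.

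This is essentially a textbook derivation, so I do not expect a genuine obstacle; the only point requiring a little care is the conditional-independence / parametrization assumption that lets $P(\mathcal{D}_S|\boldsymbol{\theta},\boldsymbol{\phi})$ collapse to $P(\mathcal{D}_S|\boldsymbol{\theta})$ — this matches the probabilistic model for the network introduced earlier, where $\boldsymbol{\theta}$ are the network parameters and $\boldsymbol{\phi}$ parametrizes only the prior, so I would state it explicitly at the start of the proof. As an alternative I could present the exact identity $\log P(\mathcal{D}_S|\boldsymbol{\phi}) = \mathcal{E}(q,\boldsymbol{\phi}) + \mathrm{KL}[q(\boldsymbol{\theta}|\mathcal{D}_S)\,\|\,P(\boldsymbol{\theta}|\mathcal{D}_S,\boldsymbol{\phi})]$ and note that the final KL term is non-negative; this version is slightly cleaner because it also makes transparent that the bound is tight precisely when $q$ equals the true posterior, which is what motivates the E-step of the meta-learning algorithm in the subsequent section.
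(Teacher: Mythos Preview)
Your proposal is correct and follows essentially the same argument as the paper: both introduce $q$ by multiplying and dividing inside the integral, apply Jensen's inequality to the concave logarithm, factor the joint $P(\mathcal{D}_S,\boldsymbol{\theta}|\boldsymbol{\phi})=P(\mathcal{D}_S|\boldsymbol{\theta},\boldsymbol{\phi})P(\boldsymbol{\theta}|\boldsymbol{\phi})$, invoke the graphical-model assumption that $\mathcal{D}_S \perp \boldsymbol{\phi} \mid \boldsymbol{\theta}$ to drop the $\boldsymbol{\phi}$ in the likelihood, and regroup the remaining terms into the KL form. The only difference is the order in which you factor versus apply Jensen, which is immaterial; your remark about the alternative exact decomposition (with the extra KL to the true posterior) is a nice bonus that the paper does not write out but uses implicitly in the E-step argument of Proposition~\ref{thm:em}.
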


Note that $\text{KL}[\cdot \| \cdot]$ in Eq.~\eqref{eq:elbo_def} represents the KL divergence between two distributions $q$ and $P$. The proof to this proposition can be found in our supplementary material. According to Proposition~\ref{thm:elbo}, the following proposition shows that Algorithm~\ref{alg:meta_training} is an instance of EM maximizing $\mathcal{E}$.

\begin{proposition}
\label{thm:em}
The parameters $\boldsymbol{\phi}$ can be estimated by maximizing the evidence lower-bound of $\log P(\mathcal{D}_S|\boldsymbol{\phi})$ via expectation maximization (EM) as shown in Algorithm~\ref{alg:meta_training}.
\end{proposition}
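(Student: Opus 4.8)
The plan is to exhibit an explicit correspondence between the two phases of one outer iteration of Algorithm~\ref{alg:meta_training} and the E- and M-steps of EM applied to $\log P(\mathcal{D}_S|\boldsymbol{\phi})$ with $\boldsymbol{\theta}$ playing the role of the latent variable, and then invoke the standard monotonicity argument for EM to get convergence. First I would fix the probabilistic dictionary implicit in Sect.~\ref{sec:generalization}: under the Gaussian/mean-squared interpretation, $\mathcal{L}_\text{REG} = -\log P(\mathcal{D}_S|\boldsymbol{\theta})$ and the regularizer $\mathcal{R}(\boldsymbol{\theta};\boldsymbol{\phi}) = -\log P(\boldsymbol{\theta}|\boldsymbol{\phi})$, both up to additive constants independent of $\boldsymbol{\theta}$ and $\boldsymbol{\phi}$ (for the weighted-$\ell^2$ choice of Eq.~\eqref{eq:reg}, $P(\boldsymbol{\theta}|\boldsymbol{\phi})$ is a zero-mean Gaussian with per-coordinate precisions $\phi_i$). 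Hence $\mathcal{F} = \mathcal{L}_\text{REG} + \mathcal{R} = -\log P(\mathcal{D}_S,\boldsymbol{\theta}|\boldsymbol{\phi})$ up to a constant, so minimizing $\mathcal{F}$ over $\boldsymbol{\theta}$ is MAP inference of $\boldsymbol{\theta}$ under the current prior $\boldsymbol{\phi}_k$.

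For the E-step I would use the classical identity $\log P(\mathcal{D}_S|\boldsymbol{\phi}) = \mathcal{E}(q,\boldsymbol{\phi}) + \textup{KL}[q(\boldsymbol{\theta}|\mathcal{D}_S)\,\|\,P(\boldsymbol{\theta}|\mathcal{D}_S,\boldsymbol{\phi})]$, which together with Proposition~\ref{thm:elbo} shows that, for fixed $\boldsymbol{\phi}$, $\mathcal{E}$ is maximized over $q$ by the posterior $P(\boldsymbol{\theta}|\mathcal{D}_S,\boldsymbol{\phi})$. Since this posterior is intractable for a deep network, I would restrict $q$ to point masses $q = \delta(\boldsymbol{\theta}-\ddot{\boldsymbol{\theta}})$; then maximizing $\mathcal{E}$ over such $q$ reduces to $\ddot{\boldsymbol{\theta}} = \argmax_{\boldsymbol{\theta}}\log P(\mathcal{D}_S,\boldsymbol{\theta}|\boldsymbol{\phi}_k) = \arg\min_{\boldsymbol{\theta}}\mathcal{F}(\,\cdot\,;\boldsymbol{\theta},\boldsymbol{\phi}_k)$, i.e. the MAP point. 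The inner loop marked ``E-step'' runs $l$ gradient-descent steps on $\mathcal{F}$ from $\boldsymbol{\theta}_k$ and returns $\ddot{\boldsymbol{\theta}}_k$, which is exactly a first-order, $l$-step approximation of that MAP point; this identifies the inner loop with a (hard/MAP) E-step.

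For the M-step, substituting $q = \delta(\boldsymbol{\theta}-\ddot{\boldsymbol{\theta}}_k)$ into $\mathcal{E}$ gives, up to a $\boldsymbol{\phi}$-independent constant (the differential entropy of the delta), the EM $Q$-function $Q(\boldsymbol{\phi}\mid\boldsymbol{\phi}_k) = \log P(\mathcal{D}_S|\ddot{\boldsymbol{\theta}}_k) + \log P(\ddot{\boldsymbol{\theta}}_k|\boldsymbol{\phi}) = -\mathcal{F}(\,\cdot\,;\ddot{\boldsymbol{\theta}}_k,\boldsymbol{\phi}) + \textup{const}$, so the exact M-step is $\boldsymbol{\phi}_{k+1} = \argmax_{\boldsymbol{\phi}} Q(\boldsymbol{\phi}\mid\boldsymbol{\phi}_k) = \arg\min_{\boldsymbol{\phi}}\mathcal{R}(\ddot{\boldsymbol{\theta}}_k;\boldsymbol{\phi})$. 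To match this to the $\boldsymbol{\phi}$-update marked ``M-step'' I would use that $\mathcal{G} = \mathcal{L}_\text{REG}+\mathcal{L}_\text{DIST}$ and $\mathcal{F} = \mathcal{L}_\text{REG}+\mathcal{R}$ share the likelihood term and that $\mathcal{L}_\text{DIST}$ does not depend on $\boldsymbol{\phi}$, so $\nabla_{\boldsymbol{\phi}_k}\mathcal{G}(\,\cdot\,;\ddot{\boldsymbol{\theta}}_k)$ propagates only through the dependence $\ddot{\boldsymbol{\theta}}_k(\boldsymbol{\phi}_k)$ built by the inner loop (the second-order term of Fig.~\ref{fig:comp_graph}); then, because $\ddot{\boldsymbol{\theta}}_k$ is (near) a stationary point of $\mathcal{F}$, an envelope-theorem argument gives $\tfrac{d}{d\boldsymbol{\phi}_k}\mathcal{F}(\,\cdot\,;\ddot{\boldsymbol{\theta}}_k,\boldsymbol{\phi}_k)\approx\partial_{\boldsymbol{\phi}_k}\mathcal{R}(\ddot{\boldsymbol{\theta}}_k;\boldsymbol{\phi}_k)$, while the update simultaneously drives $\mathcal{R}$ to mimic $\mathcal{L}_\text{DIST}$ so that $\mathcal{G}(\,\cdot\,;\ddot{\boldsymbol{\theta}}_k)\approx\mathcal{F}(\,\cdot\,;\ddot{\boldsymbol{\theta}}_k,\boldsymbol{\phi}_k)$. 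Hence the ``M-step'' line is a generalized (single gradient step) M-step on $Q(\,\cdot\mid\boldsymbol{\phi}_k)$; the final $\boldsymbol{\theta}$-update is an ordinary SGD step on the true objective $\mathcal{G}$ that lies outside the EM accounting and merely supplies the initialization for the next E-step. With both steps in place, non-decrease of $\mathcal{E}$ across outer iterations, and thus of the lower bound on $\log P(\mathcal{D}_S|\boldsymbol{\phi})$ and hence convergence, follows from the textbook EM argument (the E-step closes the bound, the M-step does not decrease $\mathcal{E}$ in $\boldsymbol{\phi}$).

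I expect the M-step identification to be the main obstacle: showing that minimizing $\mathcal{G}$ evaluated at $\ddot{\boldsymbol{\theta}}_k$ over $\boldsymbol{\phi}$ coincides with maximizing $Q(\,\cdot\mid\boldsymbol{\phi}_k) = -\mathcal{F}(\,\cdot\,;\ddot{\boldsymbol{\theta}}_k,\boldsymbol{\phi}) + \textup{const}$ is only exact once $\mathcal{R}$ already mimics $\mathcal{L}_\text{DIST}$ and once $\ddot{\boldsymbol{\theta}}_k$ is genuinely the $\mathcal{F}$-minimizer, so the argument is somewhat self-referential and rests on the finite-$l$, first-order approximations; I would therefore present it as a first-order correspondence rather than an exact identity. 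A secondary subtlety is the use of a degenerate $\delta$-posterior inside the KL term of $\mathcal{E}$ — this is precisely what makes the scheme ``hard/MAP-EM'' rather than vanilla EM and forces the delta's (formally infinite) differential entropy to be absorbed into the additive constant — together with the Gaussian normalizer $-\tfrac12\sum_i\log\phi_i$ that a fully faithful $\mathcal{R} = -\log P(\boldsymbol{\theta}|\boldsymbol{\phi})$ would carry but Eq.~\eqref{eq:reg} omits.
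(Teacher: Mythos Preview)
Your plan follows the same skeleton as the paper's proof --- identify the inner loop of Algorithm~\ref{alg:meta_training} with an E-step and the $\boldsymbol{\phi}$-update with an M-step on the ELBO of Proposition~\ref{thm:elbo} --- but your execution is noticeably more careful and differs in the details. For the E-step, the paper argues that $\mathcal{E}\le\mathbb{E}_q[\log P(\mathcal{D}_S|\boldsymbol{\theta})]$ with equality when the KL in Eq.~\eqref{eq:elbo_def} vanishes, i.e.\ when $q$ equals the \emph{prior} $P(\boldsymbol{\theta}|\boldsymbol{\phi})$, and then sets $\ddot{\boldsymbol{\theta}}_k=\argmax_{\boldsymbol{\theta}}P(\boldsymbol{\theta}|\boldsymbol{\phi}_k)$; you instead restrict $q$ to point masses and identify $\ddot{\boldsymbol{\theta}}_k$ with the MAP posterior $\argmax_{\boldsymbol{\theta}}P(\mathcal{D}_S,\boldsymbol{\theta}|\boldsymbol{\phi}_k)=\arg\min_{\boldsymbol{\theta}}\mathcal{F}$, which is the standard hard-EM reading and actually matches what the inner loop of the algorithm computes. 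For the M-step, the paper simply writes $\boldsymbol{\phi}_{k+1}=\argmax_{\boldsymbol{\phi}}\mathcal{E}(q_k,\boldsymbol{\phi})$ ``solved by gradient descent'' and does not engage with the fact that the algorithm descends $\mathcal{G}$ rather than $\mathcal{F}$; your envelope-theorem / $\mathcal{R}\approx\mathcal{L}_\text{DIST}$ discussion and the explicit flagging of that discrepancy go beyond what the paper provides. In short, your route buys an honest accounting of the approximation points (hard EM, finite-$l$ inner loop, $\mathcal{G}$ versus $\mathcal{F}$, the omitted Gaussian normalizer), while the paper's buys brevity; the underlying EM correspondence is the same.
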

\begin{proof}
The EM algorithm can be viewed as two alternating maximization steps: E-step and M-step. In the $k$-th E-step, for fixed $\boldsymbol{\phi}$, the objective $\mathcal{E}$ is bounded above by the first term in Eq.~\eqref{eq:elbo_def}, and achieves that bound when the KL divergence term is zero. This is achieved if and only if $q$ is equal to $P$. Therefore, the E-step sets $q$ to $P$ and estimates the posterior probability:
\begin{equation}
\ddot{\boldsymbol{\theta}}_k = \argmax_{\boldsymbol{\theta}_k} q_k = \argmax_{\boldsymbol{\theta}_k} P(\boldsymbol{\theta}_k|\boldsymbol{\phi}_k).
\end{equation}
And, after an E-step, the objective $\mathcal{E}$ equals the likelihood term. In the $k$-th M-step, we fix $\ddot{\boldsymbol{\theta}}$ and solve:
\begin{equation}
\boldsymbol{\phi}_{k+1} = \argmax_{\boldsymbol{\phi}_k} \mathcal{E}(q_k, \boldsymbol{\phi}_k).
\end{equation}
Both E-step and M-step are solved by gradient descent as commented in Algorithm~\ref{alg:meta_training}. We have thus shown that Algorithm~\ref{alg:meta_training} is an instance of EM.
\end{proof}

\begin{algorithm}[t]
\begin{algorithmic}[1]
\Require Batch size $N$, \# of iterations $K$, learning rate $\alpha$.
\Require Learned parameters $\boldsymbol{\phi}_{\text{META}}$ from Algorithm~\ref{alg:meta_training}.
\State Initialize $\boldsymbol{\theta}_0$
\For{$k=0$ to $K-1$}
    \State Sample $N$ examples $\{(\mathbf{x}^T_n, \mathbf{y}^T_n) \sim \mathcal{D}_T\}_{n = 1}^N$
    \State $\boldsymbol{\theta}_{k+1} \gets \boldsymbol{\theta}_k - \alpha\nabla_{\boldsymbol{\theta}_k}\mathcal{F}(\mathbf{x}^T_n, \mathbf{y}^T_n; \boldsymbol{\theta}_k, \boldsymbol{\phi}_{\text{META}})$
\EndFor
\State $\boldsymbol{\theta}_{\text{DIST}} \gets \boldsymbol{\theta}_K$
\end{algorithmic}
\caption{Meta-testing for knowledge generalization.}
\label{alg:meta_testing}
\end{algorithm}

\section{Experiments}\label{sec:experiments}

The proposed approach is evaluated in 3D hand pose estimation. We aim to answer the following three questions: (1) Can our Cross-Modal Knowledge Distillation (CMKD) distill accurate cross-modal knowledge from the source dataset?~(Sect.~\ref{sec:exp:dist}) (2) Does the proposed Cross-Modal Knowledge Generalization (CMKG) successfully transfer learned knowledge to the target dataset?~(Sect.~\ref{sec:exp:gen}) (3) And what factors influence the effect of our CMKG?~(Sect.~\ref{sec:exp:dis})

\subsection{Implementation Details}\label{sec:exp:imp}

For simplicity, we use the same architecture for teacher and student networks. We choose ResNet~\cite{he2016deep} as the backbone, and adjust the final fully connected layer to output a vector representing the 3D positions of 21 hand joints. All corresponding depth maps of RGB images in the dataset are employed as the modality containing superior knowledge.

\textbf{Data Augmentation.} Recent methods~\cite{boukhayma20193d,ge20193d,varol2017learning} show that learning from synthetic data improves the performance of 3D pose estimation, as it offers more effective hand variations than traditional data augmentation technologies, \eg, random cropping and rotation. Hence, we create a synthetic dataset of paired hand images and depth maps with their 3D annotations using the MANO~\cite{romero2017embodied} hand model for synthetic data augmentation. Following the setting of~\cite{boukhayma20193d}, hand geometries are obtained by sampling pose and shape parameters from $[-2, 2]^{10}$ and $[-0.03, 0.03]^{10}$, respectively. Meanwhile, hand appearances are modeled by the original scans with 3D coordinates and RGB values from~\cite{romero2017embodied}. We create example hand appearances using these registered scan topologies. After rotations, translations and scalings are applied to hand models, the textured hands are finally rendered on background images which are randomly sampled and cropped from~\cite{jegou2008hamming,lin2014microsoft}. In total, we synthesize 50,000 hand images with large variations for training.

\textbf{Network Training.} The input image is resized to $256 \times 256$. For CMKD, all networks are trained using Adam~\cite{kingma2014adam} with mini-batches of size 32. The learning rate is set as $2.5 \times 10^{-4}$. The teacher is pre-trained for 200 epochs, while the student is trained with only the regression loss for 100 epochs and then fine-tuned with the full loss for another 100 epochs. For CMKG, the regularizer is optimized using Stochastic Gradient Descent (SGD) with the learning rate of $1.0 \times 10^{-3}$ during the fine-tuning of the student network.

\subsection{Datasets and Metrics}

Our proposed approach is comprehensively evaluated on two publicly available datasets for 3D hand pose estimation: RHD~\cite{zimmermann2017learning} and STB~\cite{zhang2017hand} with the standard metrics.

\textbf{Datasets.} Rendered Hand Pose Dataset (RHD)~\cite{zimmermann2017learning} is a synthetic dataset built upon 20 different characters performing 39 actions. It provides 41,258 images for training and 2,728 images for evaluation with a resolution of $320 \times 320$. All of them are fully annotated with a 21 joint skeleton hand model and additionally the depth map for each hand. This dataset is challenging due to the large variations in viewpoints and textures. We employ RHD for training and evaluating our knowledge distillation method.

Stereo Hand Pose Tracking Benchmark (STB)~\cite{zhang2017hand} is a real-world dataset which contains 18,000 stereo image pairs as well as the ground truth 3D positions of 21 hand joints from different scenarios. This benchmark has 12 different sequences and every sequence contains 1,500 stereo pairs. Following the evaluation protocol of~\cite{cai2018weakly,ge20193d,spurr2018cross,zimmermann2017learning}, we use the sequence of B1 for evaluation and the others for training. STB is utilized for evaluating the proposed cross-modal knowledge generation algorithm.

To make the joint definition consistent across different datasets, we reorganize the joints of each hand according to the layout of MANO~\cite{romero2017embodied}. Especially, we move the root joint location from palm center to wrist of each hand in STB. Following the
same protocol used in~\cite{cai2018weakly,ge20193d,spurr2018cross,zimmermann2017learning}, the absolute depth of root joint (wrist) and global hand scale, which is set as the bone length between MCP and PIP joints of the middle finger, are provided at test time.

\textbf{Metrics.} We evaluate the performance of 3D hand pose estimation with three common metrics in the literature: (1) EPE: the mean hand joint error which measures the average Euclidean distance in millimeters (mm) between the predicted 3D joints and the ground truth; (2) 3D PCK: the percentage of correct key points which are within the Euclidean distance of a certain threshold to its respective ground truth position; (3) AUC: the area under the curve on 3D PCK for different error thresholds.

\subsection{Evaluation of Knowledge Distillation}\label{sec:exp:dist}

\begin{table}[t]
\begin{center}
\resizebox{.95\linewidth}{!}{
\begin{tabular}{l|l|c}
\toprule
Settings & Backbone & EPE (RGB / Depth / KD)\\
\midrule
$\mathcal{L}_\text{ACT}$ & ResNet-18 & $24.68$ / $13.60$ / $23.41_{\downarrow 1.27}$ \\
$\mathcal{L}_\text{ACT}$, $\mathcal{L}_\text{ATT}$ & ResNet-18 & $24.68$ / $13.60$ / $22.19_{\downarrow 2.49}$ \\
$\mathcal{L}_\text{ACT}$, $\mathcal{L}_\text{ATT}$, $\mathcal{A}$ & ResNet-18 & $23.07$ / $12.06$ / $20.89_{\downarrow 2.18}$ \\
$\mathcal{L}_\text{ACT}$, $\mathcal{L}_\text{ATT}$, $\mathcal{A}$ & ResNet-50 & $\underline{20.74}$ / $\underline{10.78}$ / $\underline{18.06}_{\downarrow 2.68}$ \\
\bottomrule
\end{tabular}}
\end{center}
\caption{Ablation study on the choices of loss terms used in Eq.~\eqref{eq:dist_full}, synthetic data augmentation denoted by $\mathcal{A}$, and network backbone for knowledge distillation. We also report the performance gain in EPE (mm) obtained by cross-modal knowledge distillation.}
\label{tbl:dist_ablation}
\end{table}

To evaluate the performance of the proposed knowledge distillation approach for 3D hand pose estimation, we train three networks for each setting: a baseline network trained with RGB images (RGB), a teacher network trained with depth maps (Depth) and a student network trained using the knowledge distillation algorithm presented in Sect.~\ref{sec:distillation} (KD). All the experiments are conducted on RHD dataset.

\textbf{Ablation Study.} We first evaluate the impacts of different losses used in knowledge distillation, data augmentation, and network architecture on the performance of 3D hand pose estimation. The results of EPE are presented in Table~\ref{tbl:dist_ablation}. We can see that the model trained with the full distillation loss ($\mathcal{L}_\text{ACT}$ and $\mathcal{L}_\text{ATT}$) achieves higher performance improvement, from 1.27 (mm) to 2.49 (mm), which indicates that all the losses have contributions to distilling cross-modal knowledge from depth maps for 3D hand pose estimation. Moreover, synthetic data augmentation and employing deeper network during the training procedure can further boost the performance.

\begin{figure*}[t]
\begin{center}
\includegraphics[height=15.5em]{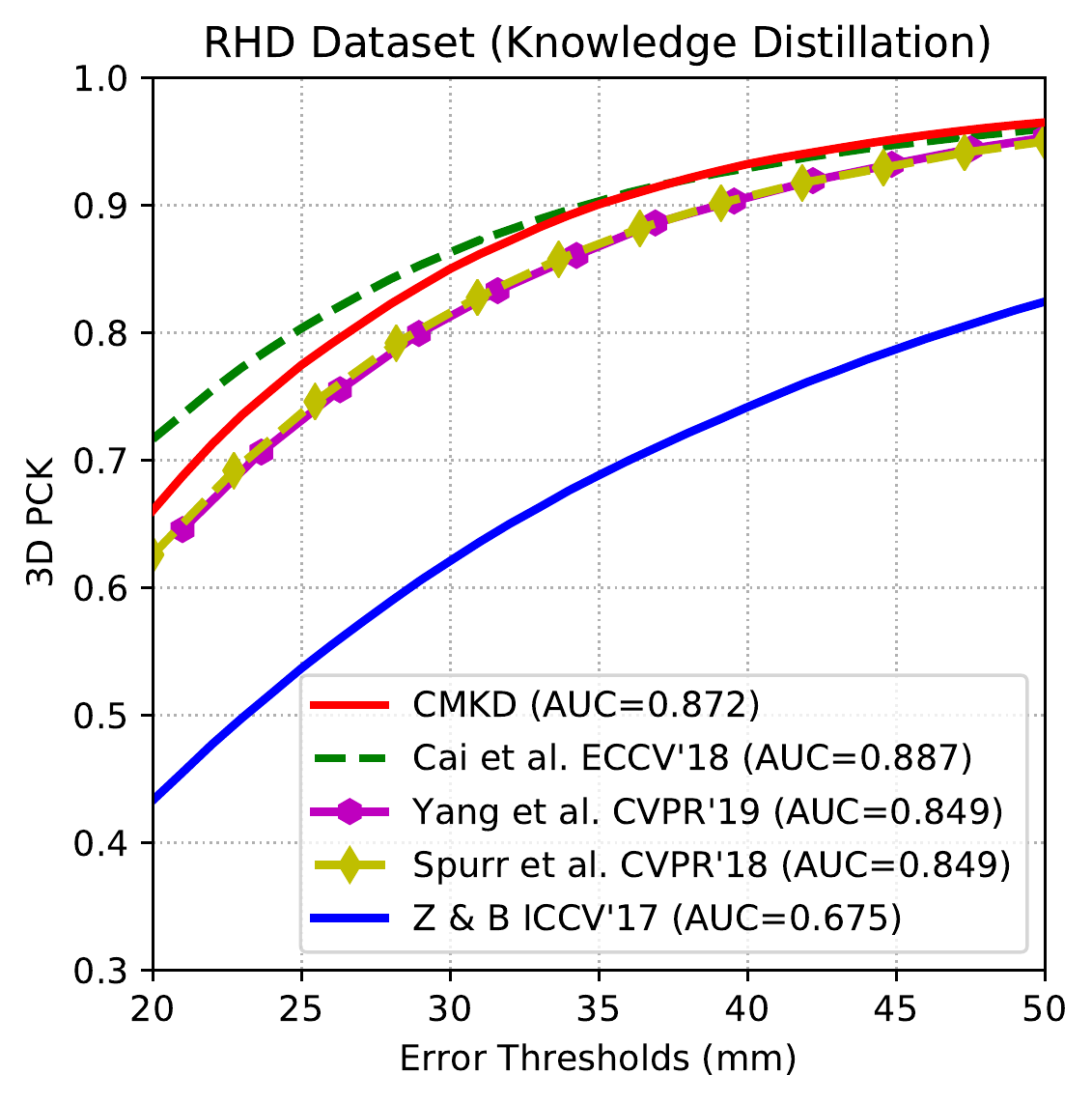}
\includegraphics[height=15.5em]{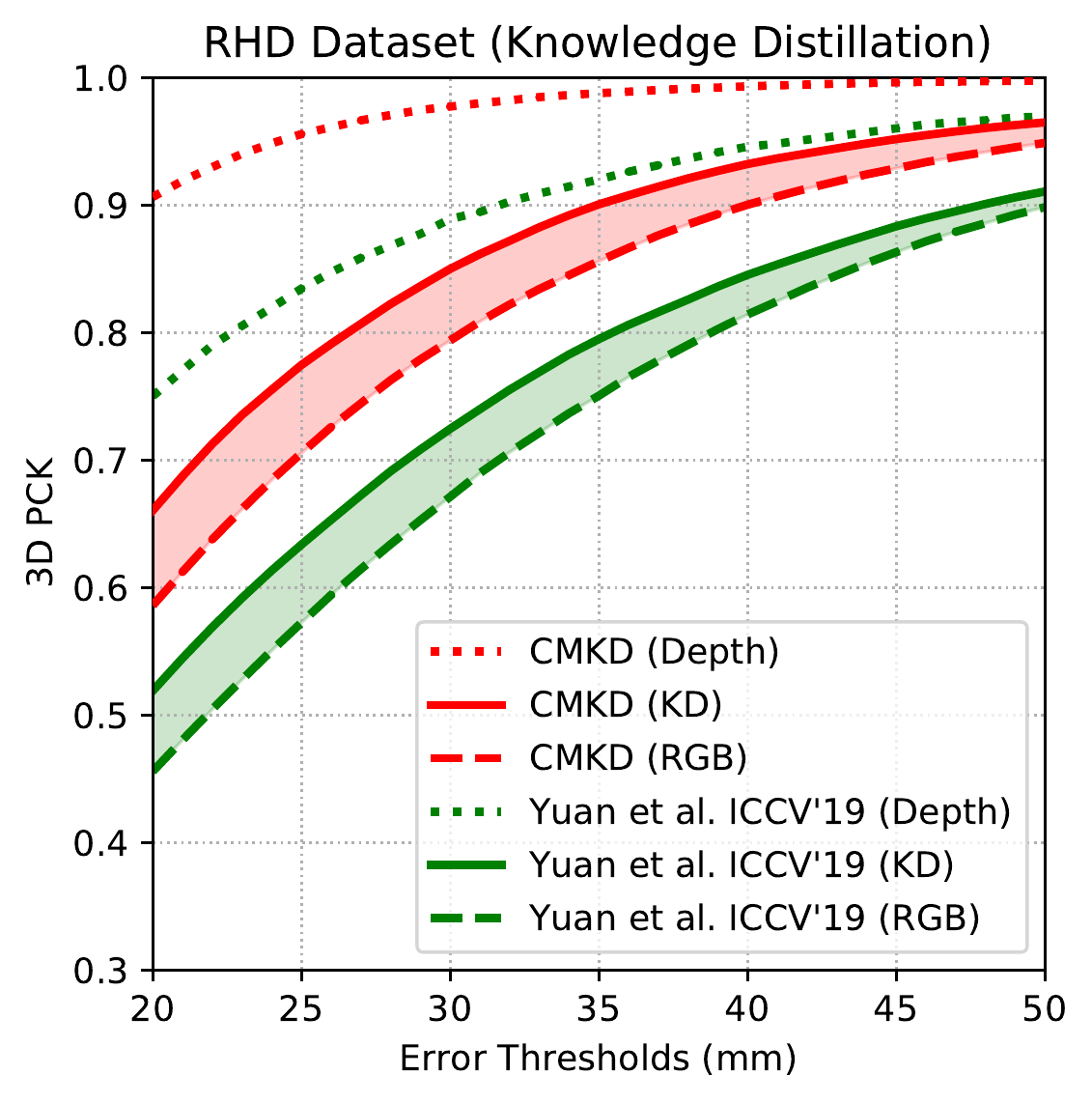}
\includegraphics[height=15.5em]{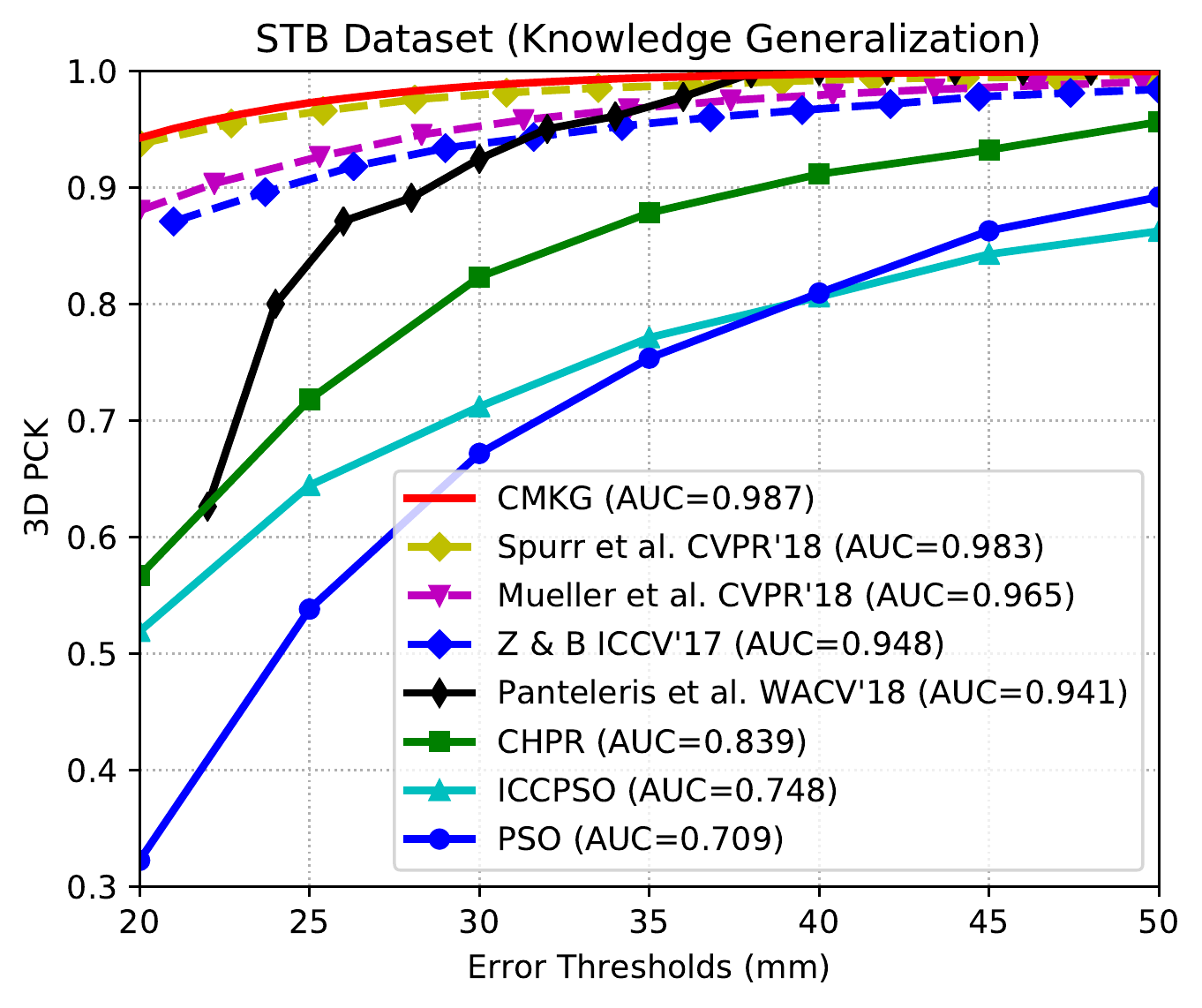}
\end{center}
\vspace{-.8em}
\caption{Comparisons with state of the art. Left: 3D PCK on RHD~\cite{zimmermann2017learning} of our knowledge distillation approach (CMKD). Our method has comparable performance to Cai~\etal~\cite{cai2018weakly} which relies on additional 2D annotations for network training. Middle: Comparison with Yuan~\etal~\cite{yuan2019rgb} which also distills knowledge from depth. Our approach obtains a more significant improvement (red area, $\Delta_\text{AUC} = 0.045$) than~\cite{yuan2019rgb} (green area, $\Delta_\text{AUC} = 0.041$). Right: Our knowledge generalization method (CMKG) obtains state-of-the-art results on STB~\cite{zhang2017hand}.}
\label{fig:dist_comparison}
\end{figure*}

\textbf{Comparison to State of the Art.} We compare the 3D PCK curves with state-of-the-art methods~\cite{cai2018weakly,spurr2018cross,yang2019disentangling,yuan2019rgb,zimmermann2017learning} on RHD dataset in Fig.~\ref{fig:dist_comparison}. We use ResNet-50 as the backbone. Note that some other works~\cite{boukhayma20193d,ge20193d,zhang2019end} aim to predict the 3D hand shape other than hand joint locations, which are with different research targets compared with ours. Therefore, they are not included here. In Fig.~\ref{fig:dist_comparison} (left), our method surpasses most existing methods except~\cite{cai2018weakly}, which has a higher AUC of 0.015. However, it is not directly comparable, as~\cite{cai2018weakly} incorporates 2D annotations as an additional supervision during network training.

In Fig.~\ref{fig:dist_comparison} (middle), we further compare our approach to Yuan~\etal~\cite{yuan2019rgb} which is the most related work also distilling cross-modal knowledge from depth maps for 3D hand pose estimation. We can find that our method substantially outperforms~\cite{yuan2019rgb}. More importantly, the performance gain achieved by our approach ($\Delta_\text{AUC} = 0.045$) is larger than~\cite{yuan2019rgb} ($\Delta_\text{AUC} = 0.041$), which shows that the proposed knowledge distillation algorithm is more efficient.

\subsection{Evaluation of Knowledge Generalization}\label{sec:exp:gen}

In order to evaluate the effectiveness of the proposed knowledge generalization algorithm, we transfer the learned cross-modal knowledge in RHD to STB and compare our approach to other regularization functions.

\textbf{Effect of Regularizers.} In this experiment, we study the effect of different regularizers including the proposed $\mathcal{R}$ in Eq.~\eqref{eq:reg} on the performance of network trained on STB. We compare our formulation with the default regularizers commonly used in the literature: $\sigma \sum_{i} \| \phi_i \|^p$, where $\| \cdot \|^p$ is the $p$-norm of the parameter and $\sigma$ is a constant weight manually selected for each network. We experiment on the $\ell^1$ and $\ell^2$ regularizers (where $p$ equals 1 or 2, respectively) and different choices of $\sigma$. We also implement a variant of the proposed $\mathcal{R}$ which is $\ell^1$-regularized. The performance of these regularizers are reported in Table~\ref{tbl:gen_ablation}. We observe that our proposed regularizers outperform the default regularization functions by a large margin. Especially, our $\ell^2$-regularized $\mathcal{R}$ achieves the best performance. These results demonstrate that $\mathcal{R}$ carries effective knowledge learned from the source dataset which helps the training of the target network.

\begin{table}[t]
\begin{center}
\resizebox{.75\linewidth}{!}{
\begin{tabular}{l|c|c}
\toprule
Regularizer & EPE (mm) & AUC\\
\midrule
None & $15.67$ & $0.915$ \\
$\ell^1$, $\sigma = 1.0 \times 10^{-4}$ & $11.41_{\downarrow 4.26}$ & $0.972_{\uparrow 0.057}$ \\
$\ell^1$, $\sigma = 1.0 \times 10^{-6}$ & $11.82_{\downarrow 3.85}$ & $0.964_{\uparrow 0.049}$ \\
$\ell^2$, $\sigma = 1.0 \times 10^{-3}$ & $12.28_{\downarrow 3.39}$ & $0.957_{\uparrow 0.042}$ \\
$\ell^2$, $\sigma = 1.0 \times 10^{-5}$ & $12.02_{\downarrow 3.65}$ & $0.964_{\uparrow 0.049}$ \\
\midrule
$\mathcal{R}$, $\ell^1$-regularized & $\;\: 8.86_{\downarrow 6.81}$ & $0.985_{\uparrow 0.070}$ \\
$\mathcal{R}$, $\ell^2$-regularized & $\;\: \underline{8.18}_{\downarrow 7.49}$ & $\underline{0.987}_{\uparrow 0.072}$ \\
\bottomrule
\end{tabular}}
\end{center}
\caption{Effect of different classes of regularization functions on STB~\cite{zhang2017hand}. Note that $\sigma$ denotes the constant weight manually chosen for the default $\ell^1$ or $\ell^2$ regularizer. We report EPE (mm) and AUC together with the performance gain for each method.}
\label{tbl:gen_ablation}
\end{table}

\begin{figure}[t]
\begin{center}
\includegraphics[width=.495\linewidth]{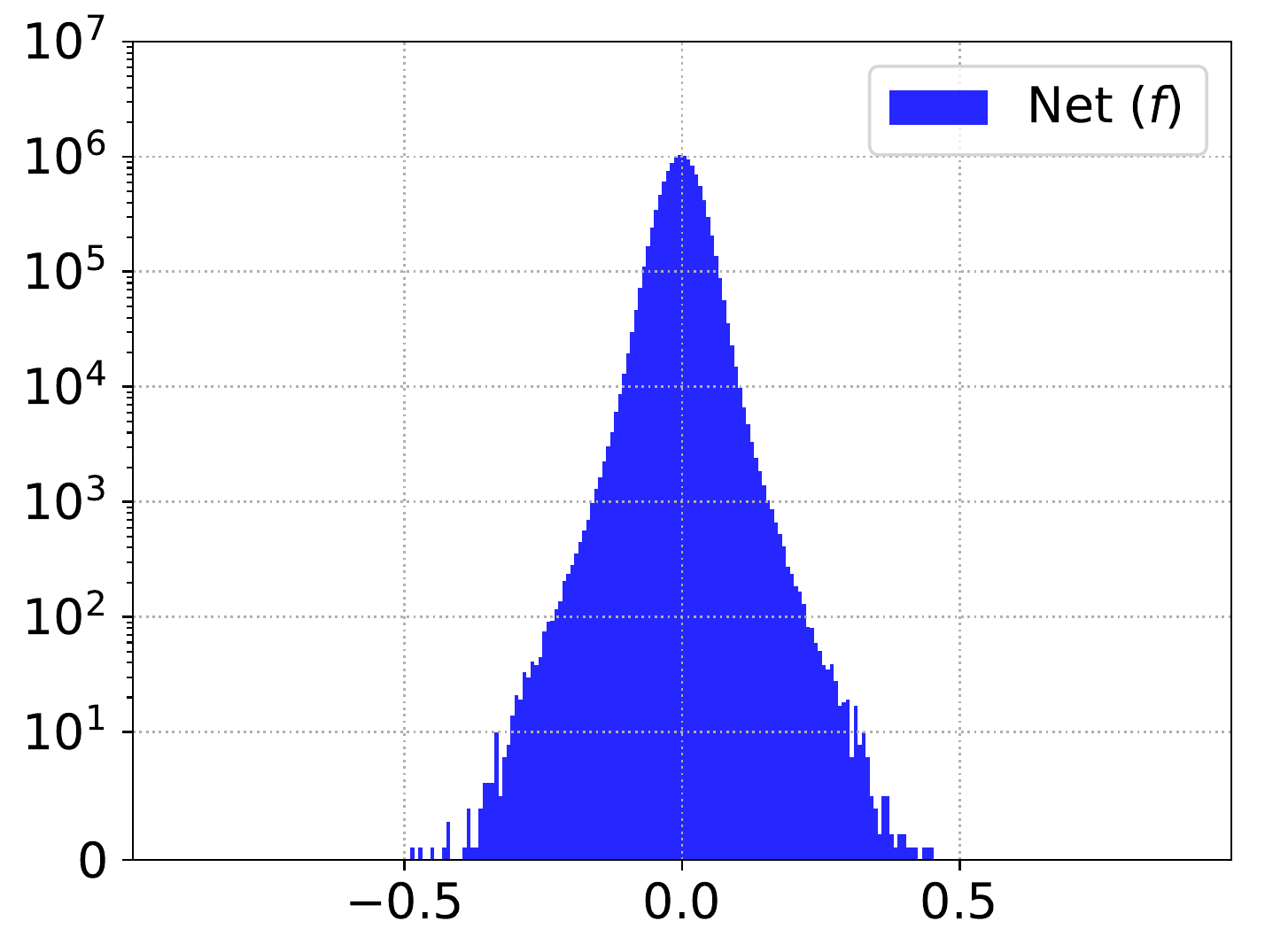}
\includegraphics[width=.495\linewidth]{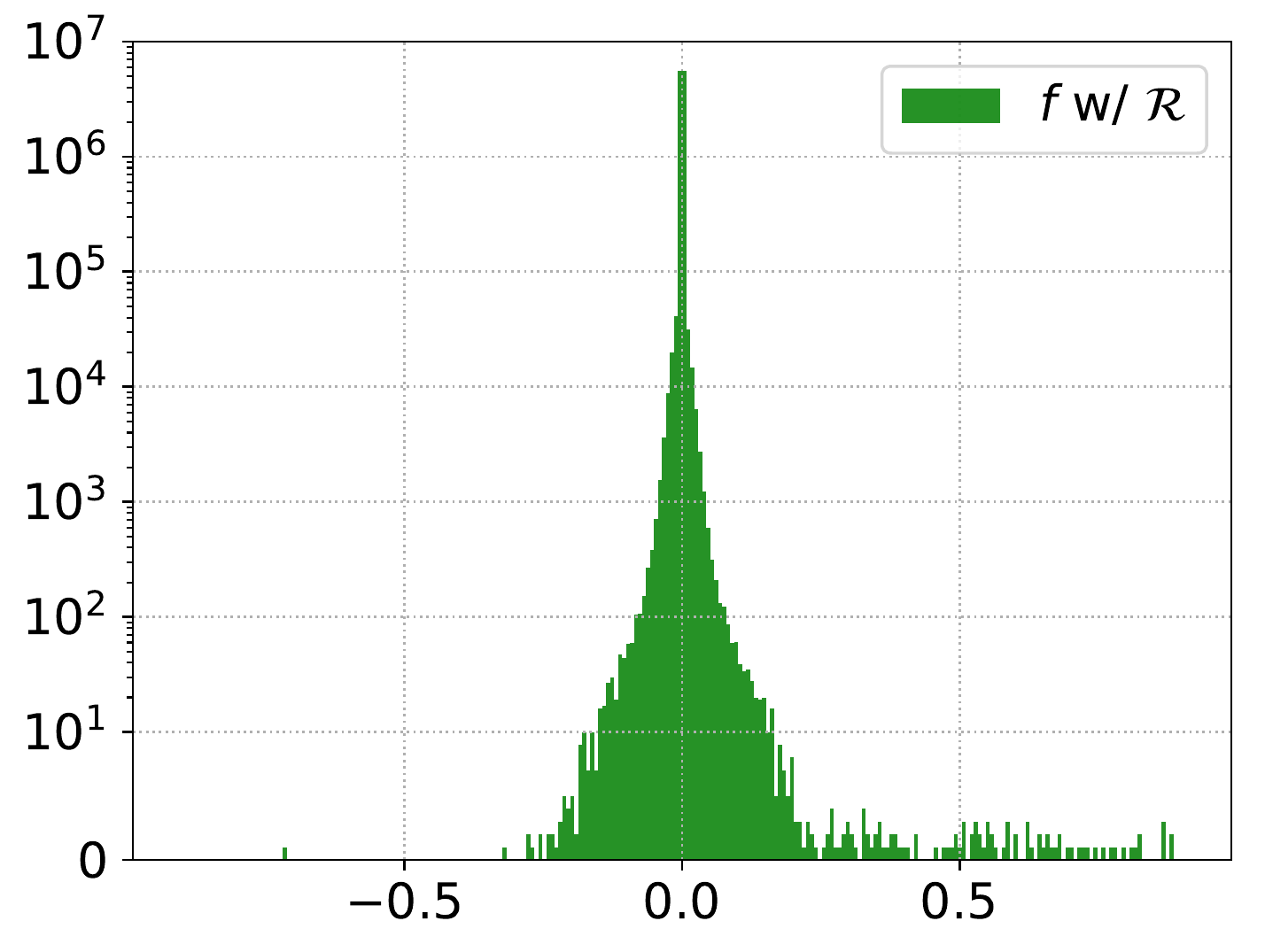}
\end{center}
\vspace{-.8em}
\caption{Histograms of the parameters learned by different regression networks on STB~\cite{zhang2017hand} dataset. Left: Histogram of the network $f$ without any form of regularization. Right: Histogram of the network trained with the proposed regularizer $\mathcal{R}$ in Eq.~\eqref{eq:reg}.}
\label{fig:param_comparison}
\end{figure}

\begin{figure*}[t]
\begin{center}
\includegraphics[width=\linewidth]{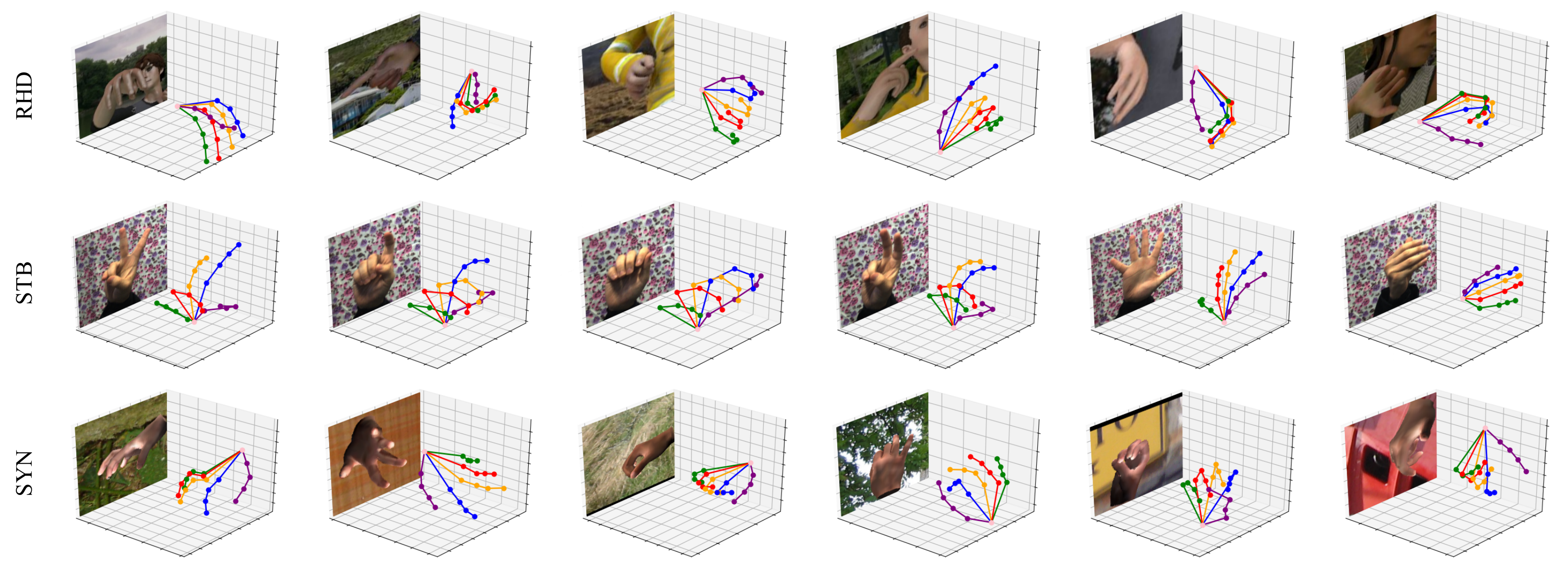}
\end{center}
\vspace{-.6em}
\caption{Visual results of our approach on RHD~\cite{zimmermann2017learning} (top) and STB~\cite{zhang2017hand} (middle). To demonstrate the generalizability of the proposed method, we also show the results after applying the network trained on STB to the synthetic dataset (bottom). Best viewed in color.}
\label{fig:visual}
\end{figure*}

\textbf{Visualization of Parameters.} To give an intuitive understanding of how our regularizer $\mathcal{R}$ affects the network learning, we plot the histograms of the parameters learned by the network with and without the use of $\mathcal{R}$ in Fig.~\ref{fig:param_comparison}. We can make the following observations. First, for the network trained with regularization, there is a sharper peak at zero. This is due to the positive $\phi_i$ in Eq.~\eqref{eq:reg} which decays the corresponding $\theta_i$ to zero. Second, on the other hand, the parameters of the network with regularization have wider spread, since they are boosted by the negative $\phi_i$.

\textbf{Comparison to State of the Art.} We further compare the proposed CMKG to other approaches~\cite{mueller2018ganerated,panteleris2018using,spurr2018cross,zimmermann2017learning} on STB in Fig.~\ref{fig:dist_comparison} (right). We can see that our regularized network matches the state-of-the-art performance without using complex network architecture, loss functions or additional constraints like previous methods. Our visual results are shown in Fig.~\ref{fig:visual}. As seen, our method is able to accurately predict 3D hand poses across different datasets and generalize the learned knowledge to some novel cases.

\subsection{Discussion}\label{sec:discussion}\label{sec:exp:dis}

One potential concern about the learned knowledge (regularizer) from the source dataset is how it performs when applied to different target datasets. First of all, we point out that it is impossible to learn a domain-independent regularizer from a single source which performs consistently well on all other datasets, since their data usually follow different statistics. Here, we hypothesize that the effect of the learned regularizer depends on two factors: (1) the domain shift between the source and target dataset, and (2) the effect of regularization on the target dataset.

The first factor is straightforward as large domain shifts always lead to difficulties in network generalization. This is a well-defined problem in transfer learning which is tackled by domain adaptation~\cite{pan2009survey,qiao2020learning,quionero2009dataset}. To illustrate the second factor, we conduct an additional experiment which applies the same regularizers in Sect.~\ref{sec:exp:gen} to a number of different target datasets. Due to the space limitation, we ask the readers to refer to the supplementary material for detailed setups of this experiment. Looking at Table~\ref{tbl:db_ablation}, we see a strong correlation between the default and the proposed regularizer: if there is a large increase obtained by the default regularizer, $\mathcal{R}$ can boost the performance even further; otherwise, our improvement is limited. This is intuitive since our formulate is consist with the default regularization technique.

\begin{table}[t]
\begin{center}
\resizebox{.85\linewidth}{!}{
\begin{tabular}{l|c|c|c}
\toprule
Target Dataset & w/o $\mathcal{R}$ & Default $\ell^2$ & $\mathcal{R}$ in Eq.~\eqref{eq:reg} \\
\midrule
FreiHAND~\cite{zimmermann2019frei}, $\mathcal{G}$ & $12.37$ & $12.28_{\downarrow 0.09}$ & $12.27_{\downarrow 0.10}$ \\
FreiHAND~\cite{zimmermann2019frei}, $\mathcal{H}$ & $14.49$ & $14.02_{\downarrow 0.47}$ & $13.82_{\downarrow 0.67}$ \\
FreiHAND~\cite{zimmermann2019frei}, $\mathcal{S}$ & $15.80$ & $14.92_{\downarrow 0.88}$ & $14.26_{\downarrow 1.54}$ \\
FreiHAND~\cite{zimmermann2019frei}, $\mathcal{A}$ & $16.18$ & $15.16_{\downarrow 1.02}$ & $14.18_{\downarrow 2.00}$ \\
STB~\cite{zhang2017hand} & $15.67$ & $12.02_{\downarrow 3.65}$ & $\;\: 8.18_{\downarrow 7.49}$ \\
\bottomrule
\end{tabular}}
\end{center}
\caption{Effect of regularizers on different target datasets. We report EPE (mm) and the performance gain for each setting. $\mathcal{G}$, $\mathcal{H}$, $\mathcal{S}$ and $\mathcal{A}$ are four different domains contained in FreiHAND~\cite{zimmermann2019frei}.}
\label{tbl:db_ablation}
\end{table}

Our findings suggest multiple directions of future work. For one, the proposed scheme currently has access to only one single source dataset; we believe that learning from multiple sources will result in better generalizability of the model. On the other hand, we treat target priors as a regularization term in this work, which is perhaps the simplest formulation. We believe that a further exploration on choices of this term will result in improved performance.

\section{Conclusion}\label{sec:conclusion}

We introduce an end-to-end scheme for Cross-Modal Knowledge Generalization to transfer cross-modal knowledge between source and target datasets where superior modalities are missing. The core idea is to interpret knowledge as priors on the parameters of the student network which can be efficiently learned by meta-learning. Our method is comprehensively evaluated in 3D hand pose estimation. We show that our scheme can efficiently generalize cross-modal knowledge to the target dataset and significantly boost the network to match the state-of-the-art performance. We believe our work provides new insights in conventional cross-modal knowledge distillation tasks, and serves as a strong baseline in this novel research direction.

{\small
\bibliographystyle{ieee_fullname}
\bibliography{egbib}
}

\clearpage
\begin{appendices}
\section{Supplementary Material}

\subsection{Proof of Proposition 1}\label{supp:proof}

\begin{proposition}[Proposition~\ref{thm:elbo} restated]
\label{thm:elbo_supp}
Let $q$ be any posterior distribution function over the latent variables $\boldsymbol{\theta}$ given the evidence $\mathcal{D}_S$. Then, the marginal log-likelihood can be lower bounded:
\begin{equation*}
\log P(\mathcal{D}_S|\boldsymbol{\phi}) = \log \int P(\mathcal{D}_S, \boldsymbol{\theta}|\boldsymbol{\phi}) d\boldsymbol{\theta} \geq \mathcal{E}(q,\boldsymbol{\phi}),
\end{equation*}
where $\mathcal{E}$ is the evidence lower-bound (ELBO) defined as:
\begin{equation*}
\mathcal{E}(q,\boldsymbol{\phi}) \triangleq \mathbb{E}_q [ \log P(\mathcal{D}_S|\boldsymbol{\theta})]\\ - \textup{KL}[ q(\boldsymbol{\theta}|\mathcal{D}_S) \| P(\boldsymbol{\theta}|\boldsymbol{\phi})].
\end{equation*}
\end{proposition}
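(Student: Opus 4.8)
The plan is to derive the evidence lower-bound by the standard variational argument, introducing the posterior $q(\boldsymbol{\theta}|\mathcal{D}_S)$ inside the marginal likelihood and applying Jensen's inequality. First I would write the marginal log-likelihood as $\log P(\mathcal{D}_S|\boldsymbol{\phi}) = \log \int P(\mathcal{D}_S,\boldsymbol{\theta}|\boldsymbol{\phi})\,d\boldsymbol{\theta}$ and multiply and divide the integrand by $q(\boldsymbol{\theta}|\mathcal{D}_S)$, so that the integral becomes $\mathbb{E}_q[ P(\mathcal{D}_S,\boldsymbol{\theta}|\boldsymbol{\phi})/q(\boldsymbol{\theta}|\mathcal{D}_S)]$. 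Since $\log$ is concave, Jensen's inequality gives $\log P(\mathcal{D}_S|\boldsymbol{\phi}) \geq \mathbb{E}_q[\log P(\mathcal{D}_S,\boldsymbol{\theta}|\boldsymbol{\phi})] - \mathbb{E}_q[\log q(\boldsymbol{\theta}|\mathcal{D}_S)]$.

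The next step is to massage the right-hand side into the claimed form. I would factor the joint as $P(\mathcal{D}_S,\boldsymbol{\theta}|\boldsymbol{\phi}) = P(\mathcal{D}_S|\boldsymbol{\theta},\boldsymbol{\phi})\,P(\boldsymbol{\theta}|\boldsymbol{\phi})$, and then invoke the modeling assumption already present in the paper that the data likelihood depends on $\boldsymbol{\theta}$ only (the network output $P(\mathbf{y}_i|\mathbf{x}_i,\boldsymbol{\theta})$ does not see $\boldsymbol{\phi}$), so $P(\mathcal{D}_S|\boldsymbol{\theta},\boldsymbol{\phi}) = P(\mathcal{D}_S|\boldsymbol{\theta})$. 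Substituting, the bound reads $\mathbb{E}_q[\log P(\mathcal{D}_S|\boldsymbol{\theta})] + \mathbb{E}_q[\log P(\boldsymbol{\theta}|\boldsymbol{\phi})] - \mathbb{E}_q[\log q(\boldsymbol{\theta}|\mathcal{D}_S)]$, and the last two terms combine into $-\mathrm{KL}[q(\boldsymbol{\theta}|\mathcal{D}_S)\,\|\,P(\boldsymbol{\theta}|\boldsymbol{\phi})]$ by the definition of KL divergence. This yields exactly $\mathcal{E}(q,\boldsymbol{\phi})$ as defined in Eq.~\eqref{eq:elbo_def}, completing the proof.

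An alternative route, which I might include as a remark, is to prove the identity $\log P(\mathcal{D}_S|\boldsymbol{\phi}) = \mathcal{E}(q,\boldsymbol{\phi}) + \mathrm{KL}[q(\boldsymbol{\theta}|\mathcal{D}_S)\,\|\,P(\boldsymbol{\theta}|\mathcal{D}_S,\boldsymbol{\phi})]$ directly, without Jensen, by expanding the right-hand KL term and using $P(\boldsymbol{\theta}|\mathcal{D}_S,\boldsymbol{\phi}) = P(\mathcal{D}_S,\boldsymbol{\theta}|\boldsymbol{\phi})/P(\mathcal{D}_S|\boldsymbol{\phi})$; non-negativity of KL then gives the inequality and also makes transparent that the bound is tight precisely when $q$ equals the true posterior, which is what Proposition~\ref{thm:em} exploits in the E-step.

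The only genuine subtlety — not a calculational obstacle but a modeling one that should be stated explicitly — is the conditional-independence assumption $P(\mathcal{D}_S|\boldsymbol{\theta},\boldsymbol{\phi}) = P(\mathcal{D}_S|\boldsymbol{\theta})$, i.e.\ that $\boldsymbol{\phi}$ influences the data only through the parameters $\boldsymbol{\theta}$ it priors over. This is consistent with the generative picture in Sect.~4.1 and mirrors the conditional independence already assumed in Eq.~\eqref{eq:gen}, so I would flag it as the one place where a reader needs to buy into the probabilistic model; everything else is routine manipulation of expectations and the definition of the KL divergence.
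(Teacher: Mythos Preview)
Your proposal is correct and essentially mirrors the paper's own proof: both introduce $q(\boldsymbol{\theta}|\mathcal{D}_S)$ into the marginal likelihood, apply Jensen's inequality, factor the joint using the conditional-independence assumption $P(\mathcal{D}_S|\boldsymbol{\theta},\boldsymbol{\phi})=P(\mathcal{D}_S|\boldsymbol{\theta})$ (which the paper justifies via its graphical model), and then regroup into the expectation-minus-KL form. The only cosmetic difference is that the paper factors the joint before inserting $q$ while you do it after Jensen; your flagged subtlety and your optional KL-gap remark are both apt and consistent with how the paper uses the bound in Proposition~\ref{thm:em}.
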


\begin{proof}

The proposed meta-training as described in Algorithm 1 of the main paper makes a posterior inference based on the graphical model in Fig.~\ref{fig:meta_graph}. Given the evidence $\mathcal{D}_S$, learning the parameters $\boldsymbol{\phi}$ leads to maximize the likelihood $P(\mathcal{D}_S|\boldsymbol{\phi})$:
\begin{equation*}
\begin{aligned}
\log P(\mathcal{D}_S|\boldsymbol{\phi}) &= \log \int P(\mathcal{D}_S, \boldsymbol{\theta}|\boldsymbol{\phi}) d\boldsymbol{\theta}\\
&= \log \int P(\mathcal{D}_S|\boldsymbol{\theta}, \boldsymbol{\phi}) P(\boldsymbol{\theta}|\boldsymbol{\phi}) d\boldsymbol{\theta}\\
&= \log \int P(\mathcal{D}_S|\boldsymbol{\theta}) P(\boldsymbol{\theta}|\boldsymbol{\phi}) d\boldsymbol{\theta}\\
&= \log \int q(\boldsymbol{\theta}|\mathcal{D}_S) \frac{P(\mathcal{D}_S|\boldsymbol{\theta}) P(\boldsymbol{\theta}|\boldsymbol{\phi})}{q(\boldsymbol{\theta}|\mathcal{D}_S)} d\boldsymbol{\theta}.
\end{aligned}
\end{equation*}
By Jensen's inequality, we have:
\begin{equation*}
\begin{aligned}
\log P(\mathcal{D}_S|\boldsymbol{\phi}) &= \log \int q(\boldsymbol{\theta}|\mathcal{D}_S) \frac{P(\mathcal{D}_S|\boldsymbol{\theta}) P(\boldsymbol{\theta}|\boldsymbol{\phi})}{q(\boldsymbol{\theta}|\mathcal{D}_S)} d\boldsymbol{\theta}\\
&\geq \int q(\boldsymbol{\theta}|\mathcal{D}_S) \log \frac{P(\mathcal{D}_S|\boldsymbol{\theta}) P(\boldsymbol{\theta}|\boldsymbol{\phi})}{q(\boldsymbol{\theta}|\mathcal{D}_S)} d\boldsymbol{\theta}\\
&\triangleq \mathcal{E}(q, \boldsymbol{\phi}),
\end{aligned}
\end{equation*}
where $\mathcal{E}(q, \boldsymbol{\phi})$ is the evidence lower-bound (ELBO) of the likelihood $\log P(\mathcal{D}_S|\boldsymbol{\phi})$. Then, we further have:
\begin{equation*}
\begin{aligned}
\mathcal{E}(q, \boldsymbol{\phi}) &= \int q(\boldsymbol{\theta}|\mathcal{D}_S) \log \frac{P(\mathcal{D}_S|\boldsymbol{\theta}) P(\boldsymbol{\theta}|\boldsymbol{\phi})}{q(\boldsymbol{\theta}|\mathcal{D}_S)} d\boldsymbol{\theta}\\
&= \begin{aligned}[t] \int q(\boldsymbol{\theta}|\mathcal{D}_S) &\log P(\mathcal{D}_S|\boldsymbol{\theta}) d\boldsymbol{\theta}\\
&+ \int q(\boldsymbol{\theta}|\mathcal{D}_S) \log \frac{P(\boldsymbol{\theta}|\boldsymbol{\phi})}{q(\boldsymbol{\theta}|\mathcal{D}_S)} d\boldsymbol{\theta}\end{aligned}\\
&= \begin{aligned}[t] \int q(\boldsymbol{\theta}|\mathcal{D}_S) &\log P(\mathcal{D}_S|\boldsymbol{\theta}) d\boldsymbol{\theta}\\
&- \int q(\boldsymbol{\theta}|\mathcal{D}_S) \log \frac{q(\boldsymbol{\theta}|\mathcal{D}_S)}{P(\boldsymbol{\theta}|\boldsymbol{\phi})} d\boldsymbol{\theta}\end{aligned}\\
&= \begin{aligned}[t] \mathbb{E}_{\boldsymbol{\theta} \sim q(\boldsymbol{\theta}|\mathcal{D}_S)} &\left[ \log P(\mathcal{D}_S|\boldsymbol{\theta}) \right]\\
&- \text{KL}\left[ q(\boldsymbol{\theta}|\mathcal{D}_S) \middle\| P(\boldsymbol{\theta}|\boldsymbol{\phi}) \right].\end{aligned}
\end{aligned}
\end{equation*}
We have thus proven Proposition~\ref{thm:elbo_supp}.
\end{proof}

\begin{figure}[t]
\begin{center}
\includegraphics[width=0.36\linewidth]{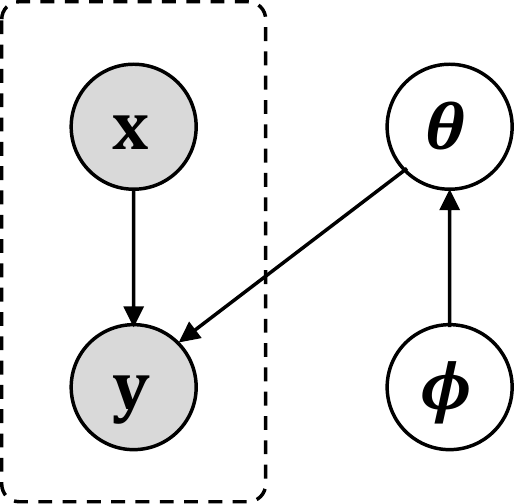}
\end{center}
\caption{Graphical models for meta-training algorithm.}
\label{fig:meta_graph}
\end{figure}

\subsection{Efficient Implementation of Algorithm 1}\label{supp:implement}

In order to implement Algorithm~\ref{alg:meta_training} of the main paper, we need to compute the second order derivative of the network parameters when a set of $\boldsymbol{\phi}$ are updated by gradient descent. This is computational expensive especially when the scale of the backbone network becomes very large. In this section, we provide an efficient implementation of Algorithm~\ref{alg:meta_training} when the derivative \wrt $\boldsymbol{\phi}$ of the regularizer $\mathcal{R}$ can be calculated directly.

As described in the main paper, we implement $\mathcal{R}$ by a weighted $\ell^2$ regularizer in this work. Therefore, the regularized objective function of Eq.~\eqref{eq:meta_reg} in the main paper can be rewritten by:
\begin{equation}
\label{eq:loss}
\mathcal{F}(\mathbf{x}_i, \mathbf{y}_i; \boldsymbol{\theta}, \boldsymbol{\phi}) = \mathcal{L}_\text{REG}(\mathbf{x}_i, \mathbf{y}_i; \boldsymbol{\theta}) + \sum_i \phi_i \|\theta_i\|^2,
\end{equation}
where $\phi_i$ is the $i$-th weight of the regularizer and $\theta_i$ is the $i$-th parameter of the student network. Then, the $k$-th gradient descent step of the network parameter $\theta_i^k$ is:
\begin{equation}
\begin{aligned}
\label{eq:grad}
\theta_i^{k+1} &= \theta_i^k - \alpha\frac{\partial \mathcal{F}}{\partial \theta_i^k} = \theta_i^k - \alpha\frac{\partial \left( \mathcal{L}_\text{REG} + \sum_i \phi_i \|\theta_i^k\|^2 \right)}{\partial \theta_i^k}\\
&= \theta_i^k - \alpha\frac{\partial \mathcal{L}_\text{REG}}{\partial \theta_i} - 2\alpha\phi_i\theta_i^k\\
&= \theta_i^k (1 - 2\alpha\phi_i) - \alpha\frac{\partial \mathcal{L}_\text{REG}}{\partial \theta_i^k},
\end{aligned}
\end{equation}
where $\alpha$ is the learning rate of $\theta_i$. We can see that Eq.~\eqref{eq:grad} converts our regularizer formulation into the weight decay mechanism, where $2\phi_i$ turns into the decay rate. Since the second term of Eq.~\eqref{eq:grad} is independent with $\phi_i$, we only need to compute the first order derivative when updating $\phi_i$ of the regularizer $\mathcal{R}$. The modified meta-training approach is illustrated in Algorithm~\ref{alg:meta_training_e}.

\begin{algorithm}[t]
\begin{algorithmic}
\Require Batch size $N$, \# of iterations $K$, learning rate $\alpha$.
\Require \# of inner iterations $l$, meta learning rate $\beta$.
\State Initialize $\boldsymbol{\theta}_0$, $\boldsymbol{\phi}_0$
\For{$k=0$ to $K-1$}
    \State Sample $N$ examples $\{(\mathbf{x}^S_n, \tilde{\mathbf{x}}^S_n, \mathbf{y}^S_n) \sim \mathcal{D}_S\}_{n = 1}^N$
    \State $\ddot{\boldsymbol{\theta}}_0 \gets \boldsymbol{\theta}_k$
    \For{$i=0$ to $l-1$}
        \State $\ddot{\boldsymbol{\theta}}_{i+1} \gets \ddot{\boldsymbol{\theta}}_i (1 - 2\alpha\boldsymbol{\phi}_k) - \alpha\nabla_{\ddot{\boldsymbol{\theta}}_i}\mathcal{L}_\text{REG}(\mathbf{x}^S_n, \mathbf{y}^S_n; \ddot{\boldsymbol{\theta}}_i)$
    \EndFor
    \State $\ddot{\boldsymbol{\theta}}_k \gets \ddot{\boldsymbol{\theta}}_l$
    \State $\boldsymbol{\phi}_{k+1} \gets \boldsymbol{\phi}_k - \beta\nabla_{\boldsymbol{\phi}_k}\mathcal{G}(\mathbf{x}^S_n, \tilde{\mathbf{x}}^S_n, \mathbf{y}^S_n; \ddot{\boldsymbol{\theta}}_k)$
    \State $\boldsymbol{\theta}_{k+1} \gets \boldsymbol{\theta}_k - \alpha\nabla_{\boldsymbol{\theta}_k}\mathcal{G}(\mathbf{x}^S_n, \tilde{\mathbf{x}}^S_n, \mathbf{y}^S_n; \boldsymbol{\theta}_k)$
\EndFor
\State $\boldsymbol{\phi}_{\text{META}} \gets \boldsymbol{\phi}_K$
\end{algorithmic}
\caption{Efficient implementation of meta-training.}
\label{alg:meta_training_e}
\end{algorithm}

\subsection{Experimental Setup on FreiHAND}\label{supp:freihand}

FreiHAND~\cite{zimmermann2019frei} is a 3D hand pose dataset which records different hand actions performed by 32 people. For each hand image, MANO-based 3D hand pose annotations are provided. It currently contains 32,560 unique training samples and 3960 unique samples for evaluation. The training samples are recorded with a green screen background allowing for background removal. In addition, it applies three different post processing strategies to training samples for data augmentation. However, these post processing strategies are not applied to evaluation samples.

In Sect.~\ref{sec:exp:dis} of the main paper, we conduct the experiment to evaluate the performance of the learned regularizer when it is applied to different target datasets (domains). In this experiment, we treat the original images collected with the green screen background ($\mathcal{G}$) in FreiHAND, together with their post-processed results using three different strategies: harmonization~\cite{tsai2017deep} ($\mathcal{H}$), colorization auto~\cite{zhang2017real} ($\mathcal{A}$), colorization sample~\cite{zhang2017real} ($\mathcal{S}$), as three different domains contained by FreiHAND. However, since the domains of $\mathcal{H}$, $\mathcal{A}$ and $\mathcal{S}$ are not provided for the original evaluation samples, we create new training and evaluation splits from the original training data of FreiHAND. Therefore, for each domain, the first 30,000 training samples are used for network training while the rest 2,560 samples are leveraged for evaluation. We use the same setting as described in Sect.~\ref{sec:exp:imp} of the main paper to train the network in this dataset.

\subsection{Additional Visual Results}\label{supp:visual}

In Figs.~\ref{fig:vis_rhd} to \ref{fig:vis_syn}, we show additional visual results predicted by our method on RHD~\cite{zimmermann2017learning}, STB~\cite{zhang2017hand} and the synthetic dataset. We can see that our method is able to accurately estimate 3D hand poses across different datasets.

\begin{figure*}[t]
\begin{center}
\includegraphics[width=1.\linewidth]{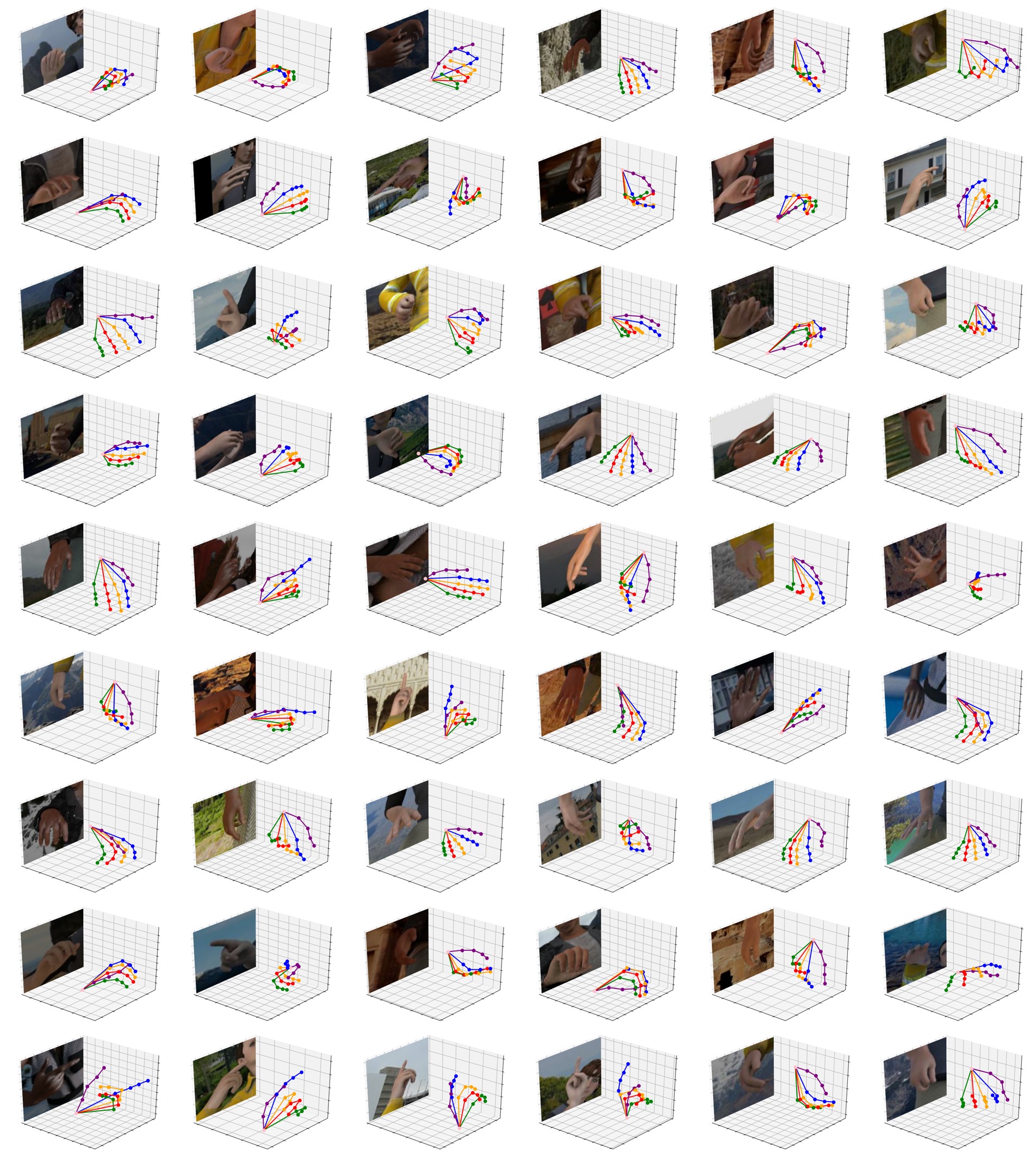}
\end{center}
\caption{Additional visual results of our approach on RHD~\cite{zimmermann2017learning} dataset.}
\label{fig:vis_rhd}
\end{figure*}

\begin{figure*}[t]
\begin{center}
\includegraphics[width=1.\linewidth]{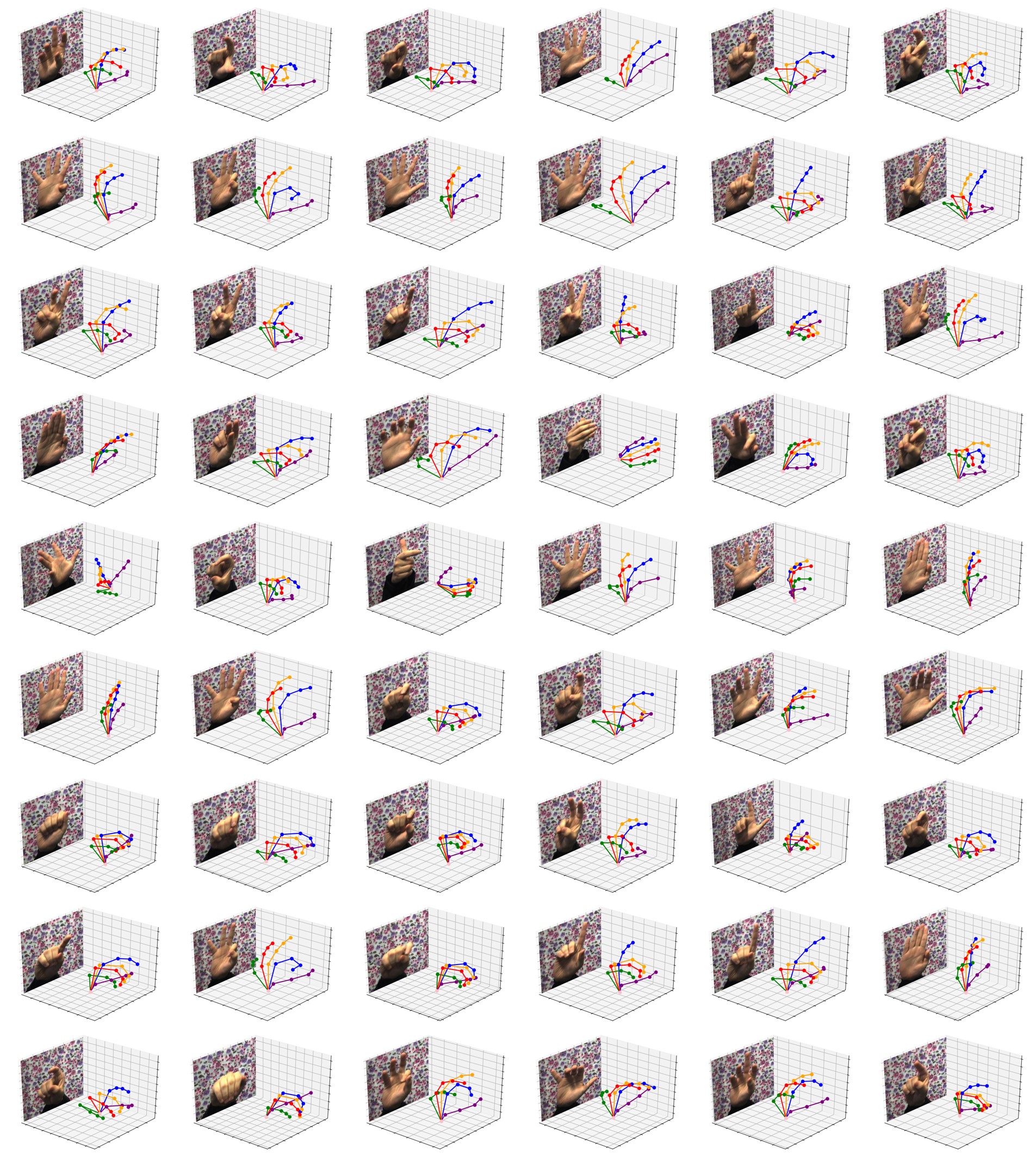}
\end{center}
\caption{Additional visual results of our approach on STB~\cite{zhang2017hand} dataset.}
\label{fig:vis_stb}
\end{figure*}

\begin{figure*}[t]
\begin{center}
\includegraphics[width=1.\linewidth]{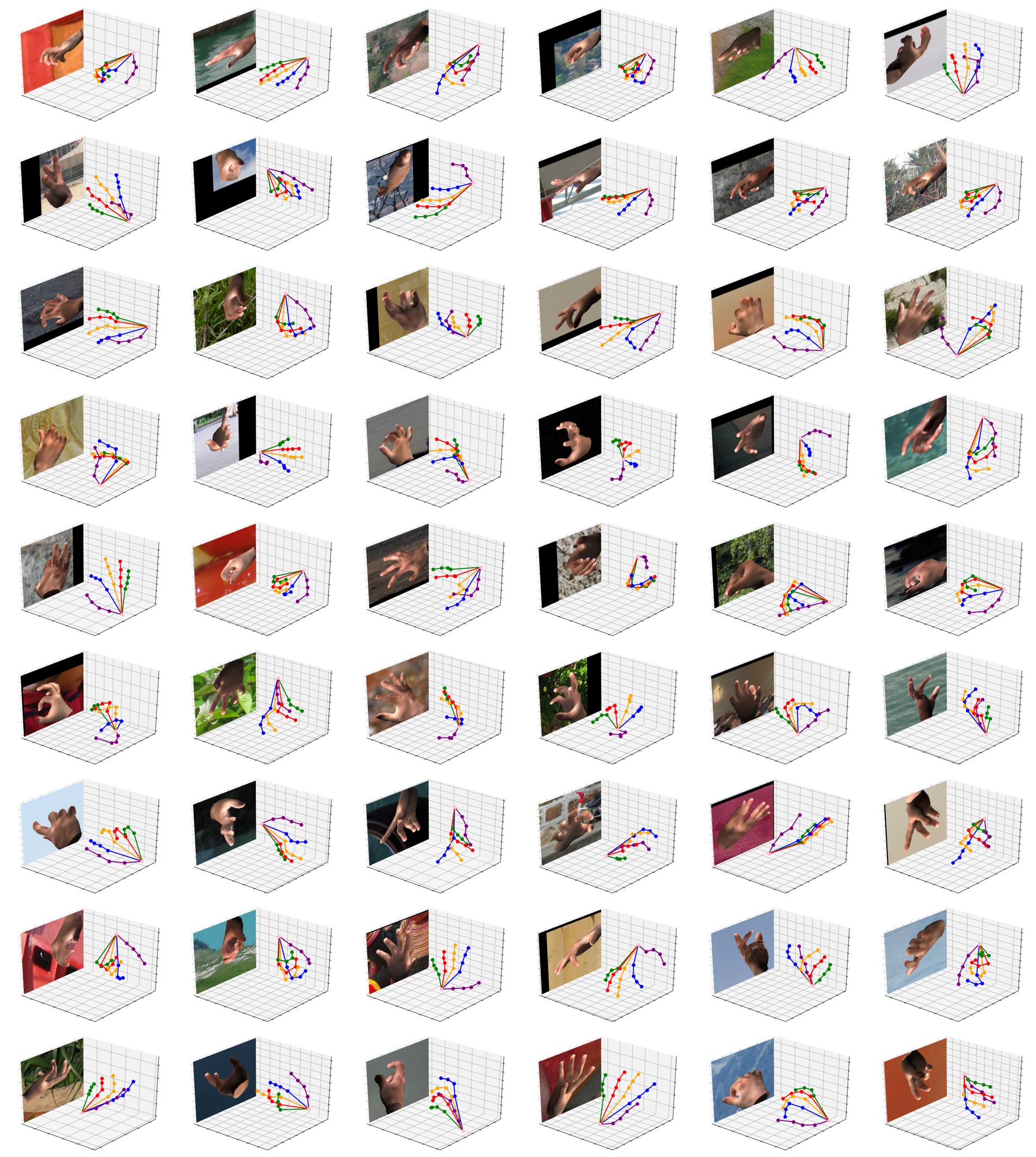}
\end{center}
\caption{Additional visual results of our approach on synthetic dataset.}
\label{fig:vis_syn}
\end{figure*}

\end{appendices}

\end{document}